\newcommand*{\ICML}{}
\newcommand*{\CAMREADY}{}
	\newtheorem{claim}[theorem]{Claim}
	\newtheorem{fact}[theorem]{Fact}
	\newcommand{\qed}{\hfill\ensuremath{\blacksquare}}
	\newtheorem{lemma}{Lemma}
	\newtheorem{theorem}{Theorem}
	\newtheorem{observation}{Observation}
	\newtheorem{claim}{Claim}
\def\be{\begin{equation}}
\def\ee{\end{equation}}
\def\beas{\begin{eqnarray*}}
\def\eeas{\end{eqnarray*}}
\def\bea{\begin{eqnarray}}
\def\eea{\end{eqnarray}}
\newcommand{\x}{{\mathbf x}}
\newcommand{\y}{{\mathbf y}}
\newcommand{\uu}{{\mathbf u}}
\newcommand{\vv}{{\mathbf v}}
\newcommand{\w}{{\mathbf w}}
\newcommand{\e}{{\mathbf e}}
\newcommand{\0}{{\mathbf 0}}
\newcommand{\U}{{\mathcal U}}
\newcommand{\X}{{\mathcal X}}
\newcommand{\Y}{{\mathcal Y}}
\newcommand{\OO}{{\mathcal O}}
\newcommand{\EE}{\mathop{\mathbb E}} % expectation operator
\newcommand{\R}{{\mathbb R}}
\newcommand{\N}{{\mathbb N}}
\newcommand{\abs}[1]{\left\lvert#1 \right\rvert}
\newcommand{\norm}[1]{\left\|#1 \right\|}
\newcommand{\inprod}[2]  {\left\langle{#1},{#2}\right\rangle}
\definecolor{xcolor-gray}{gray}{0.95}
	\let\note\endnote
	\let\note\footnote
	\newcommand*{\ABBR}{}
	\newcommand*{\ABBR}{}
	\newcommand*{\ABBR}{}
	\newcommand*{\ABBR}{}
	\newcommand{\eg}{\emph{e.g.}}
	\newcommand{\ie}{\emph{i.e.}}
	\newcommand{\cf}{\emph{cf.}}
	\newcommand{\etc}{\emph{etc.}}
	\newcommand{\vs}{\emph{vs.}}
\begin{document}

% TITLE AND AUTHORS
\ifdefined\NIPS
	\title{On the Optimization of Deep Networks: Implicit Acceleration by Overparameterization}
	\author{
	Author 1 \\
	Author 1 Institution \\
	\texttt{author1@email} \\
	\And 
	Author 2 \\
	Author 2 Institution \\
	\texttt{author2@email} \\
	\And 
	Author 3 \\
	Author 3 Institution \\
	\texttt{author3@email} \\
	}
	\maketitle
\fi
\ifdefined\CVPR
	\title{On the Optimization of Deep Networks: Implicit Acceleration by Overparameterization}
	\author{
	Author 1 \\
	Author 1 Institution \\	
	\texttt{author1@email} \\
	\and
	Author 2 \\
	Author 2 Institution \\
	\texttt{author2@email} \\	
	\and
	Author 3 \\
	Author 3 Institution \\
	\texttt{author3@email} \\
	}
	\maketitle
\fi
\ifdefined\AISTATS
	\twocolumn[
	\aistatstitle{On the Optimization of Deep Networks: Implicit Acceleration by Overparameterization}
	\ifdefined\CAMREADY
		\aistatsauthor{Author 1 \And Author 2 \And Author 3}
		\aistatsaddress{Author 1 Institution \And Author 2 Institution \And Author 3 Institution}
	\else
		\aistatsauthor{Anonymous Author 1 \And Anonymous Author 2 \And Anonymous Author 3}
		\aistatsaddress{Unknown Institution 1 \And Unknown Institution 2 \And Unknown Institution 3}
	\fi
	]	
\fi
\ifdefined\ICML
	\twocolumn[
	\icmltitlerunning{On the Optimization of Deep Networks: Implicit Acceleration by Overparameterization}
	\icmltitle{On the Optimization of Deep Networks: \\ Implicit Acceleration by Overparameterization} 
	\icmlsetsymbol{equal}{*}
	\begin{icmlauthorlist}
	\icmlauthor{Sanjeev Arora}{PU,IAS} % Add ''equal'' next to institution identifier if appropriate
	\icmlauthor{Nadav Cohen}{IAS}
	\icmlauthor{Elad Hazan}{PU,Google}
	\end{icmlauthorlist}
	\icmlaffiliation{PU}{Department of Computer Science, Princeton University, Princeton, NJ, USA}
	\icmlaffiliation{IAS}{School of Mathematics, Institute for Advanced Study, Princeton, NJ, USA}
	\icmlaffiliation{Google}{Google Brain, USA}	
	\icmlcorrespondingauthor{Nadav Cohen}{cohennadav@ias.edu}
	\icmlkeywords{Deep Learning, Learning Theory, Non-Convex Optimization}
	\vskip 0.3in
	]
	\printAffiliationsAndNotice{} % Add \icmlEqualContribution inside {} if appropriate
\fi
\ifdefined\ICLR
	\title{On the Optimization of Deep Networks: Implicit Acceleration by Overparameterization}
	\author{
	Author 1 \\
	Author 1 Institution \\
	\texttt{author1@email}
	\And
	Author 2 \\
	Author 2 Institution \\
	\texttt{author2@email}
	\And
	Author 3 \\ 
	Author 3 Institution \\
	\texttt{author3@email}
	}
	\maketitle
\fi
\ifdefined\COLT
	\title{On the Optimization of Deep Networks: Implicit Acceleration by Overparameterization}
	\coltauthor{
	\Name{Author 1} \Email{author1@email} \\
	\addr Author 1 Institution
	\And
	\Name{Author 2} \Email{author2@email} \\
	\addr Author 2 Institution
	\And
	\Name{Author 3} \Email{author3@email} \\
	\addr Author 3 Institution}
	\maketitle
\fi

% ABSTRACT
\begin{abstract}

Conventional wisdom in deep learning states that increasing depth improves expressiveness but complicates optimization.
This paper suggests that, sometimes, increasing depth can speed up optimization.
The effect of depth on optimization is decoupled from expressiveness by focusing on settings where additional layers amount to overparameterization~--~linear neural networks, a well-studied model.
Theoretical analysis, as well as experiments, show that here depth acts as a preconditioner which may accelerate convergence.
Even on simple convex problems such as linear regression with $\ell_p$ loss, $p>2$, gradient descent can benefit from transitioning to a non-convex overparameterized objective, more than it would from some common acceleration schemes.
We also prove that it is mathematically impossible to obtain the acceleration effect of overparametrization via gradients of any regularizer.

\end{abstract}

% KEYWORDS
\ifdefined\COLT
	\medskip
	\begin{keywords}
	\emph{Deep Learning}, \emph{Learning Theory}, \emph{Non-Convex Optimization}
	\end{keywords}
\fi

% INTRODUCTION
\section{Introduction} \label{sec:intro}

How does depth help?
This central question of deep learning still eludes full theoretical understanding.
The general consensus is that there is a trade-off: increasing depth improves expressiveness, but complicates optimization. 
Superior expressiveness of deeper networks, long suspected, is now confirmed by theory, albeit for fairly limited learning problems~\cite{eldan2015power,raghu2016expressive,lee2017ability,cohen2017analysis,daniely2017depth,arora2018understanding}.
Difficulties in optimizing deeper networks have also been long clear~--~the signal held by a gradient gets buried as it propagates through many layers. 
This is known as the ``vanishing/exploding gradient problem''.
Modern techniques such as batch normalization~\cite{ioffe2015batch} and residual connections~\cite{he2015deep} have somewhat alleviated these difficulties in practice.

Given the longstanding consensus on expressiveness \vs~optimization trade-offs, this paper conveys a rather counterintuitive message: increasing depth can \emph{accelerate} optimization.
The effect is shown, via first-cut theoretical and empirical analyses, to resemble a combination of two well-known tools in the field of optimization: 
\emph{momentum}, which led to provable acceleration bounds~\cite{nesterov1983method}; 
and \emph{adaptive regularization}, a more recent technique proven to accelerate by \citet{duchi2011adaptive} in their proposal of the AdaGrad algorithm.
Explicit mergers of both techniques are quite popular in deep learning~\cite{kingma2014adam,tieleman2012lecture}.
It is thus intriguing that merely introducing depth, with no other modification, can have a similar effect,~but \emph{implicitly}.

There is an obvious hurdle in isolating the effect of depth on optimization: if increasing depth leads to faster training on a given dataset, how can one tell whether the improvement arose from a true acceleration phenomenon, or simply due to better representational power (the shallower network was unable to attain the same training loss)? 
We respond to this hurdle by focusing on \emph{linear neural networks} (\cf~\citet{saxe2013exact,goodfellow2016deep,hardt2016identity,kawaguchi2016deep}).
With these models, adding layers does not alter expressiveness; it manifests itself only in the replacement of a matrix parameter by a product of matrices~--~an \emph{overparameterization}. 

We provide a new analysis of linear neural network optimization via direct treatment of the differential equations associated with gradient descent when training arbitrarily deep networks on arbitrary loss functions.
We find that the overparameterization introduced by depth leads gradient descent to operate as if it were training a shallow (single layer) network, while employing a particular preconditioning scheme. 
The preconditioning promotes movement along directions already taken by the optimization, and can be seen as an acceleration procedure that combines momentum with adaptive learning rates.
Even on simple convex problems such as linear regression with $\ell_p$~loss, $p>2$, overparameterization via depth can significantly speed up training.
Surprisingly, in some of our experiments, not only did overparameterization outperform na\"ive gradient descent, but it was also faster than two well-known acceleration methods~--~AdaGrad~\cite{duchi2011adaptive} and AdaDelta~\cite{zeiler2012adadelta}. 
In addition to purely linear networks, we also demonstrate (empirically) the implicit acceleration of overparameterization on a non-linear model, by replacing hidden layers with depth-$2$ linear networks. 
The implicit acceleration of overparametrization is different from standard regularization~--~we prove its effect cannot be attained via gradients of \emph{any} fixed regularizer.

Both our theoretical analysis and our empirical evaluation indicate that acceleration via overparameterization need not be computationally expensive.
From an optimization perspective, overparameterizing using wide or narrow networks has the same effect~--~it is only the depth that matters.

The remainder of the paper is organized as follows.
In Section~\ref{sec:related} we review related work.
Section~\ref{sec:warmup} presents a warmup example of linear regression with $\ell_p$~loss, demonstrating the immense effect overparameterization can have on optimization, with as little as a single additional scalar.
Our theoretical analysis begins in Section~\ref{sec:lnn}, with a setup of preliminary notation and terminology.
Section~\ref{sec:dynamics} derives the preconditioning scheme implicitly induced by overparameterization, followed by Section~\ref{sec:impossible} which shows that this form of preconditioning is not attainable via any regularizer.
In Section~\ref{sec:acceleration} we qualitatively analyze a very simple learning problem, demonstrating how the preconditioning can speed up optimization.
Our empirical evaluation is delivered in Section~\ref{sec:exp}.
Finally, Section~\ref{sec:conc} concludes.

% RELATED WORK
\section{Related Work} \label{sec:related}

Theoretical study of optimization in deep learning is a highly active area of research.
Works along this line typically analyze critical points (local minima, saddles) in the landscape of the training objective, either for linear networks (see for example \citet{kawaguchi2016deep,hardt2016identity} or \citet{baldi1989neural} for a classic account), or for specific non-linear networks under different restrictive assumptions (\cf~\citet{choromanska2015loss,Haeffele:2015vz,soudry2016no,safran2017spurious}).
Other works characterize other aspects of objective landscapes, for example~\citet{safran2016quality} showed that under certain conditions a monotonically descending path from initialization to global optimum exists (in compliance with the empirical observations of~\citet{goodfellow2014qualitatively}).

The dynamics of optimization was studied in~\citet{fukumizu1998effect} and~\citet{saxe2013exact}, for linear networks.
Like ours, these works analyze gradient descent through its corresponding differential equations.
\citet{fukumizu1998effect}~focuses on linear regression with~$\ell_2$ loss, and does not consider the effect of varying depth~--~only a two (single hidden) layer network is analyzed.
\citet{saxe2013exact}~also focuses on $\ell_2$~regression, but considers any depth beyond two (inclusive), ultimately concluding that increasing depth can \emph{slow down} optimization, albeit by a modest amount.
In contrast to these two works, our analysis applies to a general loss function, and any depth including one.
Intriguingly, we find that for $\ell_p$~regression, acceleration by depth is revealed only when~$p>2$.
This explains why the conclusion reached in~\citet{saxe2013exact} differs from ours.

Turning to general optimization, accelerated gradient (momentum) methods were introduced in~\citet{nesterov1983method}, and later studied in numerous works (see~\citet{wibisono2016variational} for a short review).
Such methods effectively accumulate gradients throughout the entire optimization path, using the collected history to determine the step at a current point in time.
Use of preconditioners to speed up optimization is also a well-known technique.
Indeed, the classic Newton's method can be seen as preconditioning based on second derivatives. 
Adaptive preconditioning with only first-order (gradient) information was popularized by the BFGS algorithm and its variants (\cf~\citet{jorge}).
Relevant theoretical guarantees, in the context of regret minimization, were given in~\citet{HAK07,duchi2011adaptive}.  
In terms of combining momentum and adaptive preconditioning, Adam~\cite{kingma2014adam} is a popular approach, particularly for optimization of deep networks.

Algorithms with certain theoretical guarantees for non-convex optimization, and in particular for training deep neural networks, were recently suggested in various works, for example \citet{ge2015escaping,agarwal2017finding,carmon2016accelerated,Janzamin:2015uz,livni2013algorithm} and references therein. 
Since the focus of this paper lies on the analysis of algorithms already used by practitioners, such works lie outside our scope.

% WARMUP: L_P REGRESSION
\section{Warmup: $\ell_p$ Regression} \label{sec:warmup}

We begin with a simple yet striking example of the effect being studied.
For linear regression with $\ell_p$~loss, we will see how even the slightest overparameterization can have an immense effect on optimization.
Specifically, we will see that simple gradient descent on an objective overparameterized by a single scalar, corresponds to a form of accelerated gradient descent on the original objective.

Consider the objective for a scalar linear regression problem with~$\ell_p$ loss ($p$~--~even positive integer):
\vspace{-2mm}
$$L(\w)=\EE\nolimits_{(\x,y)\sim{S}}\Big[\tfrac{1}{p}(\x^\top\w-y)^p\Big]$$
\vspace{-5mm}\\
$\x\in\R^d$ here are instances, $y\in\R$ are continuous labels, $S$~is a finite collection of labeled instances (training set), and $\w\in\R^d$ is a learned parameter vector.
Suppose now that we apply a simple overparameterization, replacing the parameter vector~$\w$ by a vector~$\w_1\in\R^d$ times a scalar~$w_2\in\R$:
\vspace{-2mm}
$$L(\w_1,w_2)=\EE\nolimits_{(\x,y)\sim{S}}\Big[\tfrac{1}{p}(\x^\top\w_1{w}_2-y)^p\Big]$$
\vspace{-5mm}\\
Obviously the overparameterization does not affect the expressiveness of the linear model.
How does it affect optimization?
What happens to gradient descent on this non-convex objective? 

\begin{observation}
Gradient descent over~$L(\w_1,w_2)$, with fixed small learning rate and near-zero initialization, is equivalent to gradient descent over~$L(\w)$ with particular adaptive learning rate and momentum terms.
\end{observation}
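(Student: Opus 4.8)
The plan is to track the ``effective'' parameter $\w^{(t)}:=w_2^{(t)}\w_1^{(t)}$ produced by gradient descent on $L(\w_1,w_2)$ and show that, to leading order in the learning rate $\eta$, it evolves like a \emph{preconditioned} gradient step on the original objective $L(\w)$. Since $L(\w_1,w_2)=L(w_2\w_1)$, the chain rule gives $\nabla_{\w_1}L = w_2\,\nabla L(\w)$ and $\partial_{w_2}L = \langle\w_1,\nabla L(\w)\rangle$, evaluated at $\w=w_2\w_1$ (here $\nabla L(\w)=\EE_{(\x,y)\sim S}[(\x^\top\w-y)^{p-1}\x]$). Plugging these into the coupled updates $\w_1^{(t+1)}=\w_1^{(t)}-\eta\nabla_{\w_1}L$, $w_2^{(t+1)}=w_2^{(t)}-\eta\,\partial_{w_2}L$ and multiplying them out, the cross term is $O(\eta^2)$, so
$$\w^{(t+1)} = \w^{(t)} - \eta\big[(w_2^{(t)})^2\,\nabla L(\w^{(t)}) + \langle\w_1^{(t)},\nabla L(\w^{(t)})\rangle\,\w_1^{(t)}\big] + O(\eta^2).$$

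The second step is to eliminate the ``hidden'' variables $w_2^{(t)}$ and $\w_1^{(t)}$ in favour of $\w^{(t)}$ via a conservation law. A one-line computation shows $\tfrac{d}{dt}\big[(w_2)^2-\|\w_1\|^2\big]=0$ along the gradient flow of $L(\w_1,w_2)$ (and that this quantity drifts by only $O(\eta^2)$ per discrete step, hence by $O(\eta)$ over the $O(1/\eta)$ steps needed for $O(1)$ progress). Together with near-zero initialization, for which $(w_2^{(0)})^2-\|\w_1^{(0)}\|^2\approx 0$, this forces $\|\w_1^{(t)}\|=|w_2^{(t)}|$, hence $\|\w^{(t)}\|=|w_2^{(t)}|\,\|\w_1^{(t)}\|=(w_2^{(t)})^2$ and $\w_1^{(t)}=\w^{(t)}/w_2^{(t)}$. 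Substituting into the display above yields a self-contained update in $\w^{(t)}$ alone:
$$\w^{(t+1)} = \w^{(t)} - \eta\,\|\w^{(t)}\|\,\nabla L(\w^{(t)}) - \frac{\eta}{\|\w^{(t)}\|}\langle\nabla L(\w^{(t)}),\w^{(t)}\rangle\,\w^{(t)} + O(\eta^2).$$
This is gradient descent on $L(\w)$ with the \emph{adaptive} step size $\rho^{(t)}=\eta\|\w^{(t)}\|$, plus the extra term $\tfrac{\eta}{\|\w^{(t)}\|}\langle\nabla L(\w^{(t)}),\w^{(t)}\rangle\,\w^{(t)}$.

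It remains to interpret the two scalar coefficients. The factor $\|\w^{(t)}\|$ grows as the iterate leaves the (near-zero) initialization, i.e.\ it acts as an adaptive learning rate. The extra term is the projection of the current gradient onto the direction of $\w^{(t)}$, scaled so as to reinforce motion along $\w^{(t)}$; and since $\w^{(t)}=\w^{(0)}+\sum_{s=1}^{t}(\w^{(s)}-\w^{(s-1)})\approx\sum_{s=1}^{t}(\w^{(s)}-\w^{(s-1)})$ is essentially the running sum of past increments, this term biases each step toward the direction already accumulated by the optimizer~--~precisely the role of a momentum term. I expect the main obstacle to be the bookkeeping around the two approximations: discarding the $O(\eta^2)$ cross terms is routine, but controlling the accumulated drift of $(w_2^{(t)})^2-\|\w_1^{(t)}\|^2$ over many steps (so that $(w_2^{(t)})^2=\|\w^{(t)}\|$ remains valid) and ensuring $w_2^{(t)}$ stays bounded away from $0$ (so that $\w_1^{(t)}=\w^{(t)}/w_2^{(t)}$ is well defined) require care; the cleanest resolution is to run the whole argument in the gradient-flow limit, where the conservation law and the cancellation of cross terms are both exact.
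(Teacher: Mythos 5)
Your proposal is correct, and it takes a genuinely different (and in fact stronger) route than the paper's own proof of this warm-up observation. The paper's argument stops at the form $\w^{(t+1)}\mapsfrom\w^{(t)}-\rho^{(t)}\nabla_{\w^{(t)}}-\gamma^{(t)}\w^{(t)}$ with $\rho^{(t)}=\eta(w_2^{(t)})^2$ and $\gamma^{(t)}=\eta(w_2^{(t)})^{-1}\nabla_{w_2^{(t)}}$; it never eliminates the hidden scalar $w_2^{(t)}$, and it converts the $\gamma^{(t)}\w^{(t)}$ term into a momentum term only informally, by observing that $\w^{(t)}$ is a weighted sum of past gradients (so \emph{some} coefficients $\mu^{(t,\tau)}$ exist). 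You instead invoke the conservation law $(w_2)^2-\|\w_1\|^2=\mathrm{const}$ under the gradient flow, combined with near-zero initialization, to force $\|\w_1^{(t)}\|=|w_2^{(t)}|$ and hence $(w_2^{(t)})^2=\|\w^{(t)}\|$, which yields a \emph{self-contained} closed-form update in $\w$ alone: $\w^{(t+1)}=\w^{(t)}-\eta\|\w^{(t)}\|\nabla L(\w^{(t)})-\tfrac{\eta}{\|\w^{(t)}\|}\langle\nabla L(\w^{(t)}),\w^{(t)}\rangle\w^{(t)}+O(\eta^2)$. This is exactly the $N=2$, $k=1$ specialization of the paper's later end-to-end update rule (Equation~\ref{eq:We_gd_single} in Claim~\ref{claim:We_gd_single}), and your conservation-law step is precisely the mechanism behind the paper's proof of Theorem~\ref{theorem:We_gf} (the balancedness invariant $W_{j+1}^\top W_{j+1}=W_j W_j^\top$, Equation~\ref{eq:Wj_agree_throughout}). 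So what the paper presents as a loose warm-up, you have tightened into the concrete preconditioned update; the paper's version is more elementary and self-contained as a motivating observation, while yours front-loads the structural insight that the paper develops in full generality in Section~\ref{sec:dynamics}. Your closing remark about resolving the drift and well-definedness issues in the gradient-flow limit is also the right call, and it is exactly how the paper handles these issues in the general proof.
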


To see this, consider the gradients of~$L(\w)$ and\,$L(\w_1,w_2)$:
\vspace{-1mm}
\beas
\nabla_\w~~&:=&\EE\nolimits_{(\x,y)\sim{S}}\big[(\x^\top\w-y)^{p-1}\x\big] \\
\nabla_{\w_1}&:=&\EE\nolimits_{(\x,y)\sim{S}}\big[(\x^\top\w_1{w}_2-y)^{p-1}w_2\x\big] \\
\nabla_{w_2}&:=&\EE\nolimits_{(\x,y)\sim{S}}\big[(\x^\top\w_1{w}_2-y)^{p-1}\w_1^\top\x\big]
\eeas
\vspace{-5mm}\\
Gradient descent over~$L(\w_1,w_2)$ with learning rate~$\eta>0$:
\vspace{-1mm}
$$\w_1^{(t+1)}\mapsfrom\w_1^{(t)}{-}\eta\nabla_{\w_1^{(t)}}
\quad,\quad
w_2^{(t+1)}\mapsfrom{w}_2^{(t)}{-}\eta\nabla_{w_2^{(t)}}$$
\vspace{-5mm}\\
The dynamics of the underlying parameter $\w=\w_{1}w_2$ are:
\vspace{-1mm}
\beas
\w^{(t+1)}=\w_1^{(t+1)}w_2^{(t+1)}
\qquad\qquad\qquad\qquad\qquad\qquad\qquad\\
\mapsfrom(\w_1^{(t)}{-}\eta\nabla_{\w_1^{(t)}})(w_2^{(t)}{-}\eta\nabla_{w_2^{(t)}}) 
~~~\quad\qquad\qquad\qquad\qquad\\
=\w_1^{(t)}w_2^{(t)}-\eta{w}_2^{(t)}\nabla_{\w_1^{(t)}}-\eta\nabla_{w_2^{(t)}}\w_1^{(t)}+\OO(\eta^2)
\,~~~~\qquad\\
=\w^{(t)}-\eta(w_2^{(t)})^2\nabla_{\w^{(t)}}-\eta(w_2^{(t)})^{-1}\nabla_{w_2^{(t)}}\w^{(t)}+\OO(\eta^2)
\,
\eeas
\vspace{-5mm}\\
$\eta$ is assumed to be small, thus we neglect~$\OO(\eta^2)$.
Denoting $\rho^{(t)}{:=}\eta(w_2^{(t)})^2\,{\in}\,\R$ and $\gamma^{(t)}{:=}\eta(w_2^{(t)})^{-1}\nabla_{w_2^{(t)}}\,{\in}\,\R$, this gives:
\vspace{-2mm}
$$\w^{(t+1)}\mapsfrom\w^{(t)}-\rho^{(t)}\nabla_{\w^{(t)}}-\gamma^{(t)}\w^{(t)}$$
\vspace{-6mm}\\
Since by assumption $\w_1$ and~$w_2$ are initialized near zero, $\w$~will initialize near zero as well.
This implies that at every iteration~$t$, $\w^{(t)}$~is a weighted combination of past gradients.
There thus exist $\mu^{(t,\tau)}\in\R$ such that:
\vspace{-2mm}
$$\w^{(t+1)}\mapsfrom\w^{(t)}-\rho^{(t)}\nabla_{\w^{(t)}}-\sum\nolimits_{\tau=1}^{t-1}\mu^{(t,\tau)}\nabla_{\w^{(\tau)}}$$
\vspace{-5mm}\\
We conclude that the dynamics governing the underlying parameter~$\w$ correspond to gradient descent with a momentum term, where both the learning rate~($\rho^{(t)}$) and momentum coefficients~($\mu^{(t,\tau)}$) are time-varying and adaptive.

% LINEAR NEURAL NETWORKS
\section{Linear Neural Networks} \label{sec:lnn}

Let~$\X:=\R^d$ be a space of objects (\eg~images or word embeddings) that we would like to infer something about, and let~$\Y:=\R^k$ be the space of possible inferences.
Suppose we are given a training set $\{(\x^{(i)},\y^{(i)})\}_{i=1}^{m}\subset\X\times\Y$, along with a (point-wise) loss function~$l:\Y\times\Y\to\R_{\geq0}$.
For example, $\y^{(i)}$~could hold continuous values with~$l(\cdot)$ being the $\ell_2$~loss: $l(\hat{\y},\y)=\frac{1}{2}\norm{\hat{\y}-\y}_2^2$; or it could hold one-hot vectors representing categories with~$l(\cdot)$ being the softmax-cross-entropy loss: $l(\hat{\y},\y)=-\sum_{r=1}^{k}y_r\log(e^{\hat{y}_r}/\sum_{r'=1}^{k}e^{\hat{y}_{r'}})$, where $y_r$ and~$\hat{y}_r$ stand for coordinate~$r$ of~$\y$ and~$\hat{\y}$ respectively.
For a predictor~$\phi$, \ie~a mapping from~$\X$ to~$\Y$, the overall training loss is~$L(\phi):=\frac{1}{m}\sum_{i=1}^{m}l(\phi(\x^{(i)}),\y^{(i)})$.
If~$\phi$ comes from some parametric family~$\Phi:=\{\phi_\theta:\X\to\Y |\theta\in\Theta\}$, we view the corresponding training loss as a function of the parameters, \ie~we consider $L^\Phi(\theta):=\frac{1}{m}\sum_{i=1}^{m}l(\phi_\theta(\x^{(i)}),\y^{(i)})$.
For example, if the parametric family in question is the class of (directly parameterized) linear predictors: 
\vspace{-2mm}
\be
\Phi^{lin}:=\{\x\mapsto{W\x}|W\in\R^{k,d}\}
\label{eq:lin}
\ee
\vspace{-6mm}\\
the respective training loss is a function from~$\R^{k,d}$ to~$\R_{\geq0}$.
\vspace{1mm}

In our context, a depth-$N$ ($N\geq2$) linear neural network, with hidden widths~$n_1,n_2,\ldots,n_{N-1}{\in}\N$, is the following parametric family of linear predictors: $\Phi^{n_1{\ldots}n_{N-1}}:=\left\{\x\mapsto{W_{N}W_{N-1}{\cdots}W_1\x}|W_j{\in}\R^{n_j,n_{j-1}},j{=}1...N\right\}$,~where by definition $n_0:=d$ and~$n_N:=k$.
As customary, we refer to each~$W_j$, $j{=}1...N$, as the weight matrix of layer~$j$.
For simplicity of presentation, we hereinafter omit from our notation the hidden widths $n_1...n_{N-1}$, and simply write~$\Phi^N$ instead of~$\Phi^{n_1{\ldots}n_{N-1}}$ ($n_1{\ldots}n_{N-1}$ will be specified explicitly if not clear by context).
That is, we denote:
\vspace{-2mm}
\bea
&\Phi^N:= & 
\label{eq:lnn} \\
&\left\{\x\mapsto{W_{N}W_{N-1}{\cdots}W_1\x}|~W_j\in\R^{n_j,n_{j-1}},~j{=}1...N\right\}&
\nonumber
\eea
\vspace{-6mm}\\
For completeness, we regard a depth-$1$ network as the family of directly parameterized linear predictors, \ie~we set~$\Phi^1:=\Phi^{lin}$  (see Equation~\ref{eq:lin}).

The training loss that corresponds to a depth-$N$ linear network~--~$L^{\Phi^N}(W_1,...,W_N)$, is a function from $\R^{n_1,n_0}{\times}{\cdots}{\times}\R^{n_N,n_{N-1}}$ to~$\R_{\geq0}$.
For brevity, we will denote this function by~$L^{N}(\cdot)$.
Our focus lies on the behavior of gradient descent when minimizing~$L^{N}(\cdot)$.
More specifically, we are interested in the dependence of this behavior on~$N$, and in particular, in the possibility of increasing~$N$ leading to acceleration.
Notice that for any~$N\geq2$ we have:
\vspace{-2mm}
\be
L^N(W_1,...,W_N)=L^1(W_{N}W_{N-1}{\cdots}W_1)
\label{eq:lnn_loss_oprm}
\ee
and so the sole difference between the training loss of a depth-$N$ network and that of a depth-$1$ network (classic linear model) lies in the replacement of a matrix parameter by a product of~$N$ matrices.
This implies that if increasing~$N$ can indeed accelerate convergence, it is not an outcome of any phenomenon other than favorable properties of depth-induced overparameterization for optimization.

% IMPLICIT DYNAMICS OF GRADIENT DESCENT
\vspace{1mm}
\section{Implicit Dynamics of Gradient Descent} \label{sec:dynamics}

In this section we present a new result for linear neural networks, tying the dynamics of gradient descent on~$L^N(\cdot)$~--~the training loss corresponding to a depth-$N$ network, to those on~$L^1(\cdot)$~--~training loss of a depth-$1$ network (classic linear model).
Specifically, we show that gradient descent on~$L^N(\cdot)$, a complicated and seemingly pointless overparameterization, can be directly rewritten as a particular preconditioning scheme over gradient descent on~$L^1(\cdot)$.

When applied to~$L^N(\cdot)$, gradient descent takes on the following form:
\vspace{-2mm}
\bea
W_j^{(t+1)} \mapsfrom (1-\eta\lambda)W_j^{(t)}-\eta\frac{\partial{L^N}}{\partial{W_j}}(W_1^{(t)},\ldots,W_N^{(t)})&&
\label{eq:Wj_gd}\\[-0.5mm]
,~j=1{\ldots}N&&
\nonumber
\eea
\vspace{-5mm}\\
$\eta>0$~here is a learning rate, and $\lambda\geq0$~is an optional weight decay coefficient.
For simplicity, we regard both~$\eta$ and~$\lambda$ as fixed (no dependence on~$t$).
Define the underlying \emph{end-to-end weight matrix}:
\be
W_e:=W_{N}W_{N-1}\cdots{W}_1
\label{eq:We}
\ee
\vspace{-4mm}\\
Given that $L^N(W_1,\ldots,W_N)=L^1(W_e)$ (Equation~\ref{eq:lnn_loss_oprm}), we view~$W_e$ as an optimized weight matrix for~$L^1(\cdot)$, whose dynamics are governed by Equation~\ref{eq:Wj_gd}.
Our interest then boils down to the study of these dynamics for different choices of~$N$.
For~$N=1$ they are (trivially) equivalent to standard gradient descent over~$L^1(\cdot)$.
We will characterize the dynamics for~$N\geq2$.

To be able to derive, in our general setting, an explicit update rule for the end-to-end weight matrix~$W_e$~(Equation~\ref{eq:We}), we introduce an assumption by which the learning rate is small, \ie~$\eta^2\approx0$.
Formally, this amounts to translating Equation~\ref{eq:Wj_gd} to the following set of differential equations:
\vspace{-1mm}
\bea
\dot{W}_j(t)=-\eta\lambda{W}_j(t)-\eta\frac{\partial{L^N}}{\partial{W_j}}(W_1(t),\ldots,W_N(t))&&
\label{eq:Wj_gf}\\[-0.5mm]
,~j=1{\ldots}N&&
\nonumber
\eea
\vspace{-5mm}\\
where~$t$ is now a continuous time index, and~$\dot{W}_j(t)$ stands for the derivative of~$W_j$ with respect to time.
The use of differential equations, for both theoretical analysis and algorithm design, has a long and rich history in optimization research (see~\citet{helmke2012optimization} for an overview).
When step sizes (learning rates) are taken to be small, trajectories of discrete optimization algorithms converge to smooth curves modeled by continuous-time differential equations, paving way to the well-established theory of the latter (\cf~\citet{boyce1969elementary}).
This approach has led to numerous interesting findings, including recent results in the context of acceleration methods~(\eg~\citet{su2014differential,wibisono2016variational}).

With the continuous formulation in place, we turn to express the dynamics of the end-to-end matrix~$W_e$:
\begin{theorem}
\label{theorem:We_gf}
Assume the weight matrices~$W_1{\ldots}W_N$ follow the dynamics of continuous gradient descent (Equation~\ref{eq:Wj_gf}).
Assume also that their initial values (time~$t_0$) satisfy, for $j=1{\ldots}N-1$:
\bea
W_{j+1}^\top(t_0)W_{j+1}(t_0)=W_j(t_0)W^\top_j(t_0)&& 
\label{eq:Wj_agree}
\eea
Then, the end-to-end weight matrix~$W_e$ (Equation~\ref{eq:We}) is governed by the following differential equation:
\bea
\dot{W}_e(t)=-\eta\lambda{N}\cdot{W}_e(t) 
\quad\quad\qquad\qquad\qquad\qquad\qquad
\label{eq:We_gf}\\
-\eta\sum\nolimits_{j=1}^N\left[W_e(t)W_e^\top(t)\right]^\frac{j-1}{N}\cdot
\quad\qquad\qquad
\nonumber\\[-2mm]
\tfrac{dL^1}{dW}(W_e(t))\cdot\left[W_e^\top(t)W_e(t)\right]^\frac{N-j}{N}
\nonumber
\eea
where $[\cdot]^\frac{j-1}{N}$~and~$[\cdot]^\frac{N-j}{N}$, $j=1\ldots{N}$, are fractional power operators defined over positive semidefinite matrices.
\end{theorem}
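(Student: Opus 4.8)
The plan is to reduce Equation~\ref{eq:We_gf} to two facts: (i) a conservation law showing that the balancedness relation of Equation~\ref{eq:Wj_agree} holds at all times $t\geq t_0$, not merely at $t_0$; and (ii) an algebraic identity that, under balancedness, expresses the partial products of the $W_j$'s as fractional powers of $W_eW_e^\top$ and $W_e^\top W_e$. Combining these with the product rule for $W_e=W_N\cdots W_1$ then yields the claim. The first ingredient needed is the chain rule: from $L^N(W_1,\ldots,W_N)=L^1(W_N\cdots W_1)$ (Equation~\ref{eq:lnn_loss_oprm}), differentiating through the matrix product gives, for each $j$, $\frac{\partial L^N}{\partial W_j}=(W_N\cdots W_{j+1})^\top\,\frac{dL^1}{dW}(W_e)\,(W_{j-1}\cdots W_1)^\top$, with the convention that an empty matrix product is the identity; substituting this into Equation~\ref{eq:Wj_gf} gives an explicit formula for each $\dot W_j$.

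To establish (i), fix $j$ and set $\Delta_j(t):=W_{j+1}^\top(t)W_{j+1}(t)-W_j(t)W_j^\top(t)$. Differentiating in $t$ and substituting the $\dot W_j$ just obtained, the weight-decay terms contribute $-2\eta\lambda\Delta_j$, while the loss-gradient terms contribute $-\eta$ times $\big(W_{j+1}^\top\tfrac{\partial L^N}{\partial W_{j+1}}+(\tfrac{\partial L^N}{\partial W_{j+1}})^\top W_{j+1}\big)-\big(\tfrac{\partial L^N}{\partial W_j}W_j^\top+W_j(\tfrac{\partial L^N}{\partial W_j})^\top\big)$. Using the chain-rule expressions, $W_{j+1}^\top\tfrac{\partial L^N}{\partial W_{j+1}}$ and $\tfrac{\partial L^N}{\partial W_j}W_j^\top$ both equal $(W_N\cdots W_{j+1})^\top\tfrac{dL^1}{dW}(W_e)(W_j\cdots W_1)^\top$, and similarly for the transposed pair, so the loss-gradient contribution cancels and $\dot\Delta_j=-2\eta\lambda\Delta_j$. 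This is a homogeneous linear ODE, and since $\Delta_j(t_0)=0$ by hypothesis, uniqueness of solutions forces $\Delta_j\equiv0$; hence $W_{j+1}^\top W_{j+1}=W_jW_j^\top$ for all $t\geq t_0$.

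For (ii), write $A_j:=W_jW_j^\top$ and $B_j:=W_j^\top W_j$, so balancedness reads $B_{j+1}=A_j$. A telescoping computation~--~repeatedly replacing an innermost factor $W_i^\top W_i$ by $W_{i-1}W_{i-1}^\top$, and dually $W_iW_i^\top$ by $W_{i+1}^\top W_{i+1}$~--~gives $(W_N\cdots W_{j+1})(W_N\cdots W_{j+1})^\top=A_N^{\,N-j}$ and $(W_{j-1}\cdots W_1)^\top(W_{j-1}\cdots W_1)=B_1^{\,j-1}$, and in particular $W_eW_e^\top=A_N^{\,N}$, $W_e^\top W_e=B_1^{\,N}$. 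As $A_N$ and $B_1$ are positive semidefinite they are the unique positive semidefinite $N$-th roots, $A_N=(W_eW_e^\top)^{1/N}$ and $B_1=(W_e^\top W_e)^{1/N}$; and since $(M^{1/N})^m=M^{m/N}$ for positive semidefinite $M$ (directly from the spectral definition of the fractional power, with the convention $0^0=1$ so that the empty products give the identity), we get $(W_N\cdots W_{j+1})(W_N\cdots W_{j+1})^\top=[W_eW_e^\top]^{(N-j)/N}$ and $(W_{j-1}\cdots W_1)^\top(W_{j-1}\cdots W_1)=[W_e^\top W_e]^{(j-1)/N}$.

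Finally, by the product rule $\dot W_e=\sum_{j=1}^N(W_N\cdots W_{j+1})\dot W_j(W_{j-1}\cdots W_1)$. Substituting $\dot W_j$, the weight-decay part is $-\eta\lambda\sum_{j=1}^N(W_N\cdots W_{j+1})W_j(W_{j-1}\cdots W_1)=-\eta\lambda N\,W_e$, while the loss-gradient part is $-\eta\sum_{j=1}^N\big[(W_N\cdots W_{j+1})(W_N\cdots W_{j+1})^\top\big]\tfrac{dL^1}{dW}(W_e)\big[(W_{j-1}\cdots W_1)^\top(W_{j-1}\cdots W_1)\big]$; inserting the identities from (ii) and re-indexing $j\mapsto N+1-j$ turns this sum into $-\eta\sum_{j=1}^N[W_eW_e^\top]^{(j-1)/N}\tfrac{dL^1}{dW}(W_e)[W_e^\top W_e]^{(N-j)/N}$, which is exactly Equation~\ref{eq:We_gf}. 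The step I expect to be the real work is (ii): extracting the algebraic identities like $(W_N\cdots W_{j+1})(W_N\cdots W_{j+1})^\top=A_N^{\,N-j}$ from balancedness, and then legitimately passing to $N$-th roots~--~invoking uniqueness of positive semidefinite roots and $(M^\alpha)^\beta=M^{\alpha\beta}$, including when the matrices are singular~--~so that the exponents land precisely at $(j-1)/N$ and $(N-j)/N$. Everything else is bookkeeping with the chain rule, the product rule, and re-indexing.
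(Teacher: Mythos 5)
Your proof is correct and follows the same three-step architecture as the paper's: (i) establish that the balancedness relation $W_{j+1}^\top W_{j+1}=W_jW_j^\top$ is conserved under the continuous dynamics (the loss-gradient contributions cancel because $W_{j+1}^\top\tfrac{\partial L^N}{\partial W_{j+1}}=\tfrac{\partial L^N}{\partial W_j}W_j^\top$, leaving a homogeneous linear ODE for the difference that vanishes by uniqueness); (ii) convert balancedness into the fractional-power identities for the partial products; (iii) apply the product rule to $W_e=W_N\cdots W_1$ and substitute.

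The one place where your route genuinely diverges from the paper's is step (ii). The paper proves the identities $\prod_{i=j}^N W_i\prod_{i=j}^N W_i^\top=[W_eW_e^\top]^{(N-j+1)/N}$ by passing to singular value decompositions of each $W_j$: from balancedness it extracts that all the $\Sigma_j$ agree up to trailing zeros, and that adjacent singular frames agree up to block-orthogonal rotations within eigenspaces ($U_j=V_{j+1}\cdot\mathrm{diag}(O_{j,r})$); it then multiplies out the products in this aligned basis. Your telescoping argument bypasses the SVD entirely: repeatedly substitute $W_iW_i^\top\mapsto W_{i+1}^\top W_{i+1}$ (and dually) from the inside out, using at each stage the elementary identity $W(W^\top W)^kW^\top=(WW^\top)^{k+1}$, which collapses $(W_N\cdots W_{j+1})(W_N\cdots W_{j+1})^\top$ to $(W_NW_N^\top)^{N-j}$ and hence to $[W_eW_e^\top]^{(N-j)/N}$ via uniqueness of the positive semidefinite $N$-th root. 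This is more elementary and avoids the bookkeeping with eigenvalue multiplicities and the block-orthogonal commutations; the SVD route, on the other hand, gives a bit more structural information as a byproduct (the explicit shared-spectrum alignment of consecutive layers). Both are valid, and your version is arguably the more self-contained argument. One small point worth spelling out when writing this up rigorously: the telescoping as stated ("replace $W_i^\top W_i$ by $W_{i-1}W_{i-1}^\top$") only literally applies to a single factor; after the first substitution the inner block is a power, so you need the auxiliary identity $W(W^\top W)^kW^\top=(WW^\top)^{k+1}$ at every stage, not just balancedness.
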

\begin{proof}(sketch~--~full details in Appendix~\ref{app:proofs:We_gf})
If $\lambda\,{=}\,0$ (no weight decay) then one can easily show that $W_{j+1}^\top(t)\dot{W}_{j+1}(t)=\dot{W}_j(t)W_j^\top(t)$ throughout optimization.
Taking the transpose of this equation and adding to itself, followed by integration over time, imply that the difference between $W_{j+1}^\top(t)W_{j+1}(t)$ and~$W_j(t)W_j^\top(t)$ is constant.
This difference is zero at initialization (Equation~\ref{eq:Wj_agree}), thus will remain zero throughout,~\ie:
\vspace{-1mm}
\be
W_{j+1}^\top(t)W_{j+1}(t)=W_j(t)W_j^\top(t)\quad,~\forall{t\geq{t_0}}
\label{eq:Wj_agree_throughout}
\ee
\vspace{-5mm}\\
A slightly more delicate treatment shows that this is true even if~$\lambda>0$, \ie~with weight decay included.

Equation~\ref{eq:Wj_agree_throughout} implies alignment of the (left and right) singular spaces of~$W_j(t)$ and~$W_{j+1}(t)$, simplifying the product~$W_{j+1}(t)W_j(t)$.
Successive application of this simplification allows a clean computation for the product of all layers (that is,~$W_e$), leading to the explicit form presented in theorem statement (Equation~\ref{eq:We_gf}).
\end{proof}

Translating the continuous dynamics of Equation~\ref{eq:We_gf} back to discrete time, we obtain the sought-after update rule for the end-to-end weight matrix:
\vspace{-1mm}
\bea
W_e^{(t+1)}\mapsfrom(1-\eta\lambda{N})W_e^{(t)}
\quad\qquad\qquad\qquad\qquad\qquad
\label{eq:We_gd}\\
-\eta\sum\nolimits_{j=1}^N\left[W_e^{(t)}(W_e^{(t)})^\top\right]^\frac{j-1}{N}\cdot
\quad\quad\qquad
\nonumber\\[-2mm]
\tfrac{dL^1}{dW}(W_e^{(t)})\cdot\left[(W_e^{(t)})^\top{W}_e^{(t)}\right]^\frac{N-j}{N}
\nonumber
\eea
\vspace{-5mm}\\
This update rule relies on two assumptions:
first, that the learning rate~$\eta$ is small enough for discrete updates to approximate continuous ones;
and second, that weights are initialized on par with Equation~\ref{eq:Wj_agree}, which will approximately be the case if initialization values are close enough to zero.
It is customary in deep learning for both learning rate and weight initializations to be small, but nonetheless above assumptions are only met to a certain extent.
We support their applicability by showing empirically (Section~\ref{sec:exp}) that the end-to-end update rule (Equation~\ref{eq:We_gd}) indeed provides an accurate description for the dynamics of~$W_e$.

A close look at Equation~\ref{eq:We_gd} reveals that the dynamics of the end-to-end weight matrix~$W_e$ are similar to gradient descent over~$L^1(\cdot)$~--~training loss corresponding to a depth-$1$ network (classic linear model).
The only difference (besides the scaling by~$N$ of the weight decay coefficient~$\lambda$) is that the gradient~$\frac{dL^1}{dW}(W_e)$ is subject to a transformation before being used.
Namely, for~$j=1{\ldots}N$, it is multiplied from the left by~$[W_{e}W_e^\top]^\frac{j-1}{N}$ and from the right by~$[W_e^\top{W}_e]^\frac{N-j}{N}$, followed by summation over~$j$.
Clearly, when~$N=1$ (depth-$1$ network) this transformation reduces to identity, and as expected, $W_e$~precisely adheres to gradient descent over~$L^1(\cdot)$.
When~$N\geq2$ the dynamics of~$W_e$ are less interpretable.
We arrange it as a vector to gain more insight:
\begin{claim}
\label{claim:We_gd_vec}
For an arbitrary matrix~$A$, denote by~$vec(A)$ its arrangement as a vector in column-first order.
Then, the end-to-end update rule in Equation~\ref{eq:We_gd} can be written as:
\vspace{-0.5mm}
\bea
vec(W_e^{(t+1)})\mapsfrom(1-\eta\lambda{N})\cdot{vec}(W_e^{(t)}) 
\qquad\qquad
\label{eq:We_gd_vec}\\
-\eta\cdot{P}_{W_e^{(t)}}vec\left(\tfrac{dL^1}{dW}(W_e^{(t)})\right)
\nonumber
\eea
\vspace{-4mm}\\
where~$P_{W_e^{(t)}}$ is a positive semidefinite preconditioning matrix that depends on~$W_e^{(t)}$.
Namely, if we denote the singular values of~$W_e^{(t)}\in\R^{k,d}$ by $\sigma_1\ldots\sigma_{\max\{k,d\}}\in\R_{\geq0}$ (by definition $\sigma_r=0$ if $r>\min\{k,d\}$), and corresponding left and right singular vectors by $\uu_1\ldots\uu_k\in\R^k$ and $\vv_1\ldots\vv_d\in\R^d$ respectively, the eigenvectors of~$P_{W_e^{(t)}}$ are:
\vspace{-1mm}
$$vec(\uu_r\vv_{r'}^\top)\quad,r=1\ldots{k}~,~r'=1\ldots{d}$$
\vspace{-5mm}\\
with corresponding eigenvalues:
\vspace{-1mm}
$$\sum\nolimits_{j=1}^{N}\sigma_r^{2\frac{N-j}{N}}\sigma_{r'}^{2\frac{j-1}{N}}\quad,r=1\ldots{k}~,~r'=1\ldots{d}$$
\vspace{-7mm}\\
\end{claim}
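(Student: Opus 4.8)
The plan is to turn the matrix update of Equation~\ref{eq:We_gd} into a vector update via the standard vectorization identity $vec(AXB)=(B^\top\otimes{A})\,vec(X)$, and then diagonalize the resulting operator using the SVD of $W_e^{(t)}$. Fix $t$ and abbreviate $W_e:=W_e^{(t)}$, $G:=\tfrac{dL^1}{dW}(W_e)$. Each summand in Equation~\ref{eq:We_gd} is of the form $AXB$ with $A=[W_eW_e^\top]^{\frac{j-1}{N}}$, $X=G$, $B=[W_e^\top{W_e}]^{\frac{N-j}{N}}$; since fractional powers of positive semidefinite matrices are symmetric we have $B^\top=B$, so applying $vec$ to the bracketed sum gives $P_{W_e}\,vec(G)$ with $P_{W_e}:=\sum_{j=1}^{N}[W_e^\top{W_e}]^{\frac{N-j}{N}}\otimes[W_eW_e^\top]^{\frac{j-1}{N}}$. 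Vectorizing Equation~\ref{eq:We_gd} term by term then yields Equation~\ref{eq:We_gd_vec}. Positive semidefiniteness is immediate: a Kronecker product of two PSD matrices is PSD (its eigenvalues are pairwise products of nonnegative numbers), and a sum of PSD matrices is PSD.

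Next I would diagonalize $P_{W_e}$ via a full SVD $W_e=U\Sigma{V}^\top$, with $U\in\R^{k,k}$, $V\in\R^{d,d}$ orthogonal and $\Sigma\in\R^{k,d}$ carrying $\sigma_1,\ldots,\sigma_{\max\{k,d\}}$ on its main diagonal (zeros beyond $\min\{k,d\}$). Then $W_eW_e^\top=U(\Sigma\Sigma^\top)U^\top$ and $W_e^\top{W_e}=V(\Sigma^\top\Sigma)V^\top$, so $[W_eW_e^\top]^{\frac{j-1}{N}}=U\,\mathrm{diag}\big(\sigma_1^{2\frac{j-1}{N}},\ldots,\sigma_k^{2\frac{j-1}{N}}\big)U^\top$ and $[W_e^\top{W_e}]^{\frac{N-j}{N}}=V\,\mathrm{diag}\big(\sigma_1^{2\frac{N-j}{N}},\ldots,\sigma_d^{2\frac{N-j}{N}}\big)V^\top$, where at the boundary exponent $0$ (occurring for $j=1$ and $j=N$) I adopt the convention $[M]^0=I$, equivalently $0^0:=1$ — the same convention under which Equation~\ref{eq:We_gd} collapses to plain gradient descent when $N=1$. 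Substituting into $P_{W_e}$ and using the mixed-product rule $(VCV^\top)\otimes(UDU^\top)=(V\otimes{U})(C\otimes{D})(V\otimes{U})^\top$ together with orthogonality of $V\otimes{U}$ gives $P_{W_e}=(V\otimes{U})\big[\sum_{j=1}^{N}\mathrm{diag}(\sigma_{r'}^{2\frac{N-j}{N}})_{r'}\otimes\mathrm{diag}(\sigma_r^{2\frac{j-1}{N}})_{r}\big](V\otimes{U})^\top$, which is a bona fide orthonormal eigendecomposition because the bracketed matrix is diagonal.

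It then remains to read off the spectrum. The columns of $V\otimes{U}$ are precisely the vectors $\vv_{r'}\otimes\uu_r=vec(\uu_r\vv_{r'}^\top)$ for $r=1\ldots{k}$, $r'=1\ldots{d}$, which are the asserted eigenvectors; the matching diagonal entry of the bracketed matrix is $\sum_{j=1}^{N}\sigma_{r'}^{2\frac{N-j}{N}}\sigma_r^{2\frac{j-1}{N}}$, and the re-indexing $j\mapsto{N}+1-j$ rewrites it as $\sum_{j=1}^{N}\sigma_r^{2\frac{N-j}{N}}\sigma_{r'}^{2\frac{j-1}{N}}$, exactly the claimed eigenvalues.

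The manipulations above are essentially bookkeeping, and I expect the only genuine points requiring care to be: (i) pinning down the convention for the zeroth fractional power so that the $j=1,N$ terms match Theorem~\ref{theorem:We_gf} and behave correctly when some $\sigma_r$ vanish (where $0^0=1$ versus $0^{>0}=0$ actually matters for the eigenvalue formula); and (ii) verifying that $\{vec(\uu_r\vv_{r'}^\top)\}_{r,r'}$ is a \emph{complete} orthonormal basis of $\R^{kd}$ even when $W_e$ is rank-deficient, which holds precisely because $U$ and $V$ are taken to be full orthogonal matrices rather than only the singular vectors attached to nonzero singular values.
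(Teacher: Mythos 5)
Your proof is correct and follows essentially the same route as the paper's: vectorize via the Kronecker identity $vec(AXB)=(B^\top\otimes A)vec(X)$, identify $P_{W_e}=\sum_j [W_e^\top W_e]^{\frac{N-j}{N}}\otimes[W_eW_e^\top]^{\frac{j-1}{N}}$, and diagonalize it through the full SVD of $W_e$ together with the mixed-product rule and orthogonality of $V\otimes U$. Your explicit attention to the $0^0$ convention for the boundary exponents and to completeness of $\{vec(\uu_r\vv_{r'}^\top)\}$ in the rank-deficient case is a welcome bit of extra rigor that the paper leaves implicit, but it does not change the substance of the argument.
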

\vspace{-5mm}
\begin{proof}
The result readily follows from the properties of the Kronecker product~--~see Appendix~\ref{app:proofs:We_gd_vec} for details.
\end{proof}

Claim~\ref{claim:We_gd_vec} implies that in the end-to-end update rule of Equation~\ref{eq:We_gd}, the transformation applied to the gradient~$\frac{dL^1}{dW}(W_e)$ is essentially a preconditioning, whose eigendirections and eigenvalues depend on the singular value decomposition of~$W_e$.
The eigendirections are the rank-$1$ matrices~$\uu_r\vv_{r'}^\top$, where~$\uu_r$ and~$\vv_{r'}$ are left and right (respectively) singular vectors of~$W_e$.
The eigenvalue of~$\uu_r\vv_{r'}^\top$ is~$\sum_{j=1}^{N}\sigma_r^{2(N-j)/N}\sigma_{r'}^{2(j-1)/N}$, where~$\sigma_r$ and~$\sigma_{r'}$ are the singular values of~$W_e$ corresponding to~$\uu_r$ and~$\vv_{r'}$ (respectively).
When~$N\geq2$, an increase in~$\sigma_r$ or~$\sigma_{r'}$ leads to an increase in the eigenvalue corresponding to the eigendirection~$\uu_r\vv_{r'}^\top$.
Qualitatively, this implies that the preconditioning favors directions that correspond to singular vectors whose presence in~$W_e$ is stronger.
We conclude that the effect of overparameterization, \ie~of replacing a classic linear model (depth-$1$ network) by a depth-$N$ linear network, boils down to modifying gradient descent by promoting movement along directions that fall in line with the current location in parameter space.
A-priori, such a preference may seem peculiar~--~why should an optimization algorithm be sensitive to its location in parameter space?
Indeed, we generally expect sensible algorithms to be translation invariant, \ie~be oblivious to parameter value.
However, if one takes into account the common practice in deep learning of initializing weights near zero, the location in parameter space can also be regarded as the overall movement made by the algorithm.
We thus interpret our findings as indicating that overparameterization promotes movement along directions already taken by the optimization, and therefore can be seen as a form of acceleration.
This intuitive interpretation will become more concrete in the subsection that follows.

A final point to make, is that the end-to-end update rule (Equation~\ref{eq:We_gd} or~\ref{eq:We_gd_vec}), which obviously depends on~$N$~--~number of layers in the deep linear network, does \emph{not} depend on the hidden widths $n_1\ldots{n}_{N-1}$ (see Section~\ref{sec:lnn}).
This implies that from an optimization perspective, overparameterizing using wide or narrow networks has the same effect~--~it is only the depth that matters.
Consequently, the acceleration of overparameterization can be attained at a minimal computational price, as we demonstrate empirically in Section~\ref{sec:exp}.

  % SINGLE OUTPUT CASE
\subsection{Single Output Case} \label{sec:dynamics:single}

To facilitate a straightforward presentation of our findings, we hereinafter focus on the special case where the optimized models have a single output, \ie~where~$k=1$.
This corresponds, for example, to a binary (two-class) classification problem, or to the prediction of a numeric scalar property (regression).
It admits a particularly simple form for the end-to-end update rule of Equation~\ref{eq:We_gd}:
\begin{claim}
\label{claim:We_gd_single}
Assume~$k=1$, \ie~$W_e\in\R^{1,d}$.
Then, the end-to-end update rule in Equation~\ref{eq:We_gd} can be written as follows:
\vspace{-5mm}
\bea
W_e^{(t+1)}\mapsfrom(1-\eta\lambda{N})\cdot{W}_e^{(t)}
\qquad\qquad\qquad\qquad\qquad
\label{eq:We_gd_single}\\
-\eta\|W_e^{(t)}\|_{2}^{2-\frac{2}{N}}\cdot\left(\tfrac{dL^1}{dW}(W_e^{(t)})+\right.
\qquad\qquad
\nonumber\\[-1mm]
\left.(N-1)\cdot{Pr}_{W_e^{(t)}}\big\{\tfrac{dL^1}{dW}(W_e^{(t)})\big\}\right)
\nonumber
\eea
\vspace{-5mm}\\
where~$\norm{\cdot}_{2}^{2-\frac{2}{N}}$ stands for Euclidean norm raised to the power of~$2-\frac{2}{N}$, and~$Pr_W\{\cdot\}$, $W\in\R^{1,d}$, is defined to be the projection operator onto the direction of~$W$:
\vspace{-1mm}
\bea
&&Pr_W:\R^{1,d}\to\R^{1,d} 
\label{eq:proj}\\
&&Pr_W\{V\}:=
\begin{cases}
\frac{W}{\norm{W}_2}V^\top\cdot\frac{W}{\norm{W}_2} & ,~W\neq0 \\
\qquad\qquad0 & ,~W=0
\end{cases}
\nonumber
\eea
\end{claim}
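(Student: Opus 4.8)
The plan is to evaluate the two fractional-power factors in the general end-to-end update rule (Equation~\ref{eq:We_gd}) explicitly in the special case $k=1$. Throughout, fix the time index $t$ and abbreviate $W:=W_e^{(t)}\in\R^{1,d}$ and $G:=\tfrac{dL^1}{dW}(W)\in\R^{1,d}$; recall that the fractional-power operator at exponent $0$ is, by convention, the identity (consistently with the proof of Theorem~\ref{theorem:We_gf} and with Claim~\ref{claim:We_gd_vec}). Assume first $W\neq0$ and set $\hat{w}:=W^\top/\norm{W}_2\in\R^d$, a unit column vector. Then $WW^\top=\norm{W}_2^2\in\R^{1,1}$ is a positive scalar, so $[WW^\top]^{p}=\norm{W}_2^{2p}$ for every $p\geq0$; and $W^\top W=\norm{W}_2^2\,\hat{w}\hat{w}^\top\in\R^{d,d}$ is a rank-$1$ positive semidefinite matrix whose single nonzero eigenvalue is $\norm{W}_2^2$, so for $p>0$ we have $[W^\top W]^{p}=\norm{W}_2^{2p}\,\hat{w}\hat{w}^\top=\norm{W}_2^{2p-2}\,W^\top W$, while $[W^\top W]^{0}=I$.

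Next I would substitute these into the $j$-th summand of Equation~\ref{eq:We_gd}. For $j=1,\ldots,N-1$ the exponent $(N-j)/N$ is strictly positive, so the summand is $\norm{W}_2^{2(j-1)/N}\cdot G\cdot\norm{W}_2^{2(N-j)/N-2}\,W^\top W$; collecting the scalar powers of $\norm{W}_2$, the exponent collapses to $\tfrac{2(j-1)+2(N-j)}{N}-2=-\tfrac{2}{N}$, independently of $j$. Since $G,W\in\R^{1,d}$, one has $G\,W^\top W=(GW^\top)\,W=\inprod{G}{W}\,W=\norm{W}_2^2\cdot Pr_{W}\{G\}$, directly from the definition of $Pr_W$ in Equation~\ref{eq:proj}; hence each of these $N-1$ summands equals $\norm{W}_2^{2-2/N}\,Pr_{W}\{G\}$. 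The remaining summand $j=N$ has exponent $0$ on the right, so it equals $\norm{W}_2^{2(N-1)/N}\cdot G=\norm{W}_2^{2-2/N}\,G$. Adding the $N$ summands and factoring out $\norm{W}_2^{2-2/N}$ produces exactly the bracketed expression in Equation~\ref{eq:We_gd_single}, and prepending the $(1-\eta\lambda N)W$ term together with the overall $-\eta$ factor completes this case.

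It remains to handle the degenerate case $W=0$. For $N\geq2$, for each $j$ at least one of the exponents $(j-1)/N,(N-j)/N$ is strictly positive, so the corresponding power of the zero matrix is the zero matrix and the whole summand vanishes; thus the update reduces to $W_e^{(t+1)}\mapsfrom(1-\eta\lambda N)\cdot0$, which agrees with Equation~\ref{eq:We_gd_single} since there $\norm{W}_2^{2-2/N}=0$ and $Pr_0\{\cdot\}$ is defined to be $0$. For $N=1$ both the general rule and the claimed rule reduce to plain gradient descent on $L^1(\cdot)$, so there is nothing to check. The only genuinely delicate point in the whole argument is the bookkeeping around the fractional-power operator — pinning down the $[\cdot]^0=I$ convention and correctly evaluating powers of the rank-deficient matrix $W^\top W$ — but once $W^\top W$ is written as $\norm{W}_2^2\hat{w}\hat{w}^\top$ this is elementary manipulation of scalars and rank-$1$ matrices.
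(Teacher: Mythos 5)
Your proof is correct and follows essentially the same route as the paper's: you write $[W_eW_e^\top]^p$ as the scalar power $\norm{W_e}_2^{2p}$, handle $[W_e^\top W_e]^p$ via its rank-one eigendecomposition (identity for $p=0$, proportional to $W_e^\top W_e$ for $p>0$), observe that the $j=N$ summand is the outlier with exponent zero, and collect the remaining $N-1$ identical summands into the projection term. The only difference is cosmetic — you spell out the $W=0$ and $N=1$ degenerate cases and the $[\cdot]^0=I$ convention explicitly, which the paper simply sets aside at the outset.
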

\vspace{-5mm}
\begin{proof}
The result follows from the definition of a fractional power operator over matrices~--~see Appendix~\ref{app:proofs:We_gd_single}.
\end{proof}

Claim~\ref{claim:We_gd_single} implies that in the single output case, the effect of overparameterization (replacing classic linear model by depth-$N$ linear network) on gradient descent is twofold:
first, it leads to an \emph{adaptive learning rate} schedule, by introducing the multiplicative factor~$\norm{W_e}_{2}^{2-2/N}$;
and second, it amplifies (by~$N$) the projection of the gradient on the direction of~$W_e$.
Recall that we view~$W_e$ not only as the optimized parameter, but also as the overall movement made in optimization (initialization is assumed to be near zero).
Accordingly, the adaptive learning rate schedule can be seen as gaining confidence (increasing step sizes) when optimization moves farther away from initialization, and the gradient projection amplification can be thought of as a certain type of \emph{momentum} that favors movement along the azimuth taken so far.
These effects bear potential to accelerate convergence, as we illustrate qualitatively in Section~\ref{sec:acceleration}, and demonstrate empirically in Section~\ref{sec:exp}.

% OVERPARAMETRIZATION EFFECTS CANNOT BE ATTAINED VIA REGULARIZATION
\section{Overparametrization Effects Cannot Be Attained via Regularization} \label{sec:impossible}

Adding a regularizer to the objective is a standard approach for improving optimization (though lately the term regularization is typically associated with generalization).
For example, AdaGrad was originally invented to compete with the best regularizer from a particular family. 
The next theorem shows (for single output case) that the effects of overparameterization cannot be attained by adding a regularization term to the original training loss, or via any similar modification.
This is not obvious a-priori, as unlike many acceleration methods that explicitly maintain memory of past gradients, updates under overparametrization are by definition the gradients of \emph{something}. 
The assumptions in the theorem are minimal and also necessary, as one must rule-out the trivial counter-example of a constant training loss.
 
\begin{theorem}
\label{theorem:impossible}
Assume~$\frac{dL^1}{dW}$ does not vanish at~$W=0$, and is continuous on some neighborhood around this point.
For a given~$N\in\N$, $N>2$,\note{
For the result to hold with~$N=2$, additional assumptions on~$L^1(\cdot)$ are required; 
otherwise any non-zero linear function~$L^1(W)=WU^\top$ serves as a counter-example~--~it leads to a vector field~$F(\cdot)$ that is the gradient of~$W\mapsto\norm{W}_{2}\cdot{W}U^\top$.
} define:
\vspace{-2mm}
\bea
&F(W):=&
\label{eq:F}\\[-1mm]
&\norm{W}_2^{2{-}\frac{2}{N}}\cdot\left(\tfrac{dL^1}{dW}(W)+(N{-}1)\cdot{Pr}_W\big\{\tfrac{dL^1}{dW}(W)\big\}\right)&
\nonumber
\eea
\vspace{-5mm}\\
where~$Pr_W\{\cdot\}$ is the projection given in Equation~\ref{eq:proj}.
Then, there exists no function (of~$W$) whose gradient field is~$F(\cdot)$.
\end{theorem}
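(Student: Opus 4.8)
The plan is to argue by contradiction, localizing the obstruction near $W=0$ and reducing to a curl computation for a ``frozen'' version of $F$ obtained by replacing $\tfrac{dL^1}{dW}$ with its value at the origin. Suppose $F=\nabla\psi$ on the neighborhood $U$ of $0$ on which $\tfrac{dL^1}{dW}$ is continuous. Then for every piecewise-$C^1$ closed curve $\gamma$ whose image lies in $U\setminus\{0\}$ one has $\oint_\gamma F\cdot d\ell=\int_0^1\tfrac{d}{ds}\psi(\gamma(s))\,ds=0$, since the integrand equals $F(\gamma(s))\cdot\gamma'(s)$ and is continuous ($F$ is continuous away from the origin). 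It therefore suffices to exhibit a single such closed curve, arbitrarily close to $0$, along which $F$ has a nonzero line integral.

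Write $a:=\tfrac{dL^1}{dW}(0)\neq0$ and $c:=2-\tfrac2N$, and fix any piecewise-$C^1$ simple closed curve $\gamma\subset\R^d\setminus\{0\}$. Consider its dilates $\epsilon\gamma$ for $\epsilon>0$. Since the projection operator is scale invariant ($Pr_{\epsilon W}=Pr_W$) and linear in its argument, the radial prefactor is homogeneous ($\norm{\epsilon W}_2^{c}=\epsilon^{c}\norm{W}_2^{c}$), and $\tfrac{dL^1}{dW}(\epsilon W)\to a$ uniformly for $W$ on the compact image of $\gamma$, a short estimate gives
\[
\oint_{\epsilon\gamma}F\cdot d\ell \;=\; \epsilon^{c+1}\bigl(I(\gamma)+o(1)\bigr),\qquad \epsilon\to0^{+},\qquad I(\gamma):=\oint_\gamma \norm{W}_2^{c}\bigl(a+(N-1)Pr_W\{a\}\bigr)\cdot d\ell .
\]
Hence it is enough to find one curve $\gamma\subset\R^d\setminus\{0\}$ with $I(\gamma)\neq0$: then for small $\epsilon$ the dilate $\epsilon\gamma\subset U\setminus\{0\}$ has nonzero $F$-line-integral, contradicting $F=\nabla\psi$.

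It remains to show that the frozen field $G(W):=\norm{W}_2^{c}a+(N-1)\norm{W}_2^{c-2}(a\cdot W)W$ is not conservative on $\R^d\setminus\{0\}$. In contrast to $F$, this field is $C^\infty$ away from the origin, so I can simply test symmetry of its Jacobian. A direct differentiation, in which the $\norm{W}_2^{c-4}$-terms and the $\delta_{ij}$-terms cancel, yields for all $i,j$
\[
\partial_iG_j-\partial_jG_i \;=\; (c-N+1)\,\norm{W}_2^{c-2}\,(W_ia_j-W_ja_i),\qquad c-N+1 \;=\; 3-\tfrac2N-N \;=\; -\tfrac{(N-1)(N-2)}{N}.
\]
For $N>2$ the scalar $c-N+1$ is nonzero. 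Assuming $d\ge2$ (for $d=1$ the statement is degenerate: there $Pr_W$ is the identity, so $F=N\,\norm{W}_2^{2-2/N}\tfrac{dL^1}{dW}$ is merely a continuous function on $\R$, which always admits a primitive), relabel coordinates so that $a_1\neq0$; then $\partial_1G_2-\partial_2G_1$ is nonzero at $W=e_2$, hence of constant sign on a small closed two-dimensional disk $D\subset\mathrm{span}\{e_1,e_2\}$ centered at $e_2$ (which avoids the origin). Green's theorem on $D$ then gives $I(\partial D)=\oint_{\partial D}G\cdot d\ell=\pm\int_D(\partial_1G_2-\partial_2G_1)\,dx_1\,dx_2\neq0$, completing the argument.

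The main obstacle — and the only genuinely nontrivial idea — is the first reduction. Because $L^1$ is assumed merely \emph{continuously differentiable}, $F$ need not be differentiable anywhere (its sole smooth ingredient is the radial prefactor $\norm{W}_2^{2-2/N}$), so one cannot attack conservativity of $F$ by a curl computation directly. The freezing-by-rescaling step is exactly what trades $F$, near $0$, for the smooth homogeneous field $G$; after that the computation is routine, and the entire obstruction collapses to the factor $(N-1)(N-2)$ — which vanishes precisely for $N\in\{1,2\}$ (matching the footnote's caveat) and is nonzero for every integer $N>2$.
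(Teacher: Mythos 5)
Your proof is correct and reaches the conclusion by a technically different route, though both arguments share the central idea of replacing $\tfrac{dL^1}{dW}$ near the origin by its constant value $a:=\tfrac{dL^1}{dW}(0)$ and then showing the resulting homogeneous reference field fails to be conservative. The paper implements this by constructing an explicit closed curve $\Gamma_{r,R}$ (two radial segments joined by two spherical arcs around the origin), decomposing $\tfrac{dL^1}{dW}=c\cdot\e+\xi$ with $\xi$ continuous and vanishing at $0$, computing the line integral of the frozen (constant) field over $\Gamma_{r,R}$ in closed form as $\bigl(\tfrac{2N}{3-2/N}-2\bigr)\bigl(R^{3-2/N}-r^{3-2/N}\bigr)$, and then bounding the residual's contribution by a length-times-sup estimate that is negligible for small $R$. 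Your version instead reaches the frozen field $G$ via the rescaling limit $\oint_{\epsilon\gamma}F=\epsilon^{\,c+1}\bigl(I(\gamma)+o(1)\bigr)$, which elegantly sidesteps the non-differentiability of $F$: since $G$ is smooth away from $0$, a direct Jacobian-asymmetry computation gives $\partial_iG_j-\partial_jG_i=(c-N+1)\norm{W}^{c-2}(W_ia_j-W_ja_i)$ with $c-N+1=-(N-1)(N-2)/N$, and a single application of Green's theorem on a small planar disk avoiding $0$ closes the argument. Both proofs therefore hinge on the same factor $(N-1)(N-2)$ --- in the paper it appears as the sign of $\tfrac{2N}{3-2/N}-2=\tfrac{2(N-1)(N-2)}{3N-2}$, in yours as the curl coefficient --- but your derivation surfaces it more transparently and trades the explicit curve-integral bookkeeping for a cleaner limiting step. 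Your explicit caveat that the claim is vacuous for $d=1$ is also correct and worth keeping; the paper's spherical-arc construction implicitly requires $d\geq2$ as well.
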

\vspace{-2mm}
\begin{proof} (sketch~--~full details in Appendix~\ref{app:proofs:impossible})
The proof uses elementary differential geometry~\cite{buck2003advanced}: curves, arc length and the fundamental theorem for line integrals, which states that the integral of~$\nabla{g}$ for any differentiable function~$g$ amounts to~$0$ along every closed curve. 

Overparametrization changes gradient descent's behavior: instead of following the original gradient~$\frac{dL^1}{dW}$, it follows some other direction~$F(\cdot)$ (see Equations~\ref{eq:We_gd_single} and~\ref{eq:F}) that is a \emph{function} of the original gradient as well as the current point~$W$. 
We think of this change as a transformation that maps one \emph{vector field} $\phi(\cdot)$ to another~--~$F_\phi(\cdot)$:
\vspace{-1mm}
\beas
F_\phi(W)= 
\quad\qquad\qquad\qquad\qquad\\[-0.5mm]
\begin{cases}
\hspace{-0.5mm}\norm{W}^{2{-}\frac{2}{N}}\hspace{-1mm}\left(\phi(W){+}(N{-}1)\hspace{-0.5mm}\inprod{\phi(W)}{\frac{W}{\norm{W}}}\hspace{-0.5mm}\frac{W}{\norm{W}}\right) 
& \hspace{-2.5mm},W{\neq}0 \\[-0.5mm]
\qquad\qquad\qquad\qquad\qquad0 
& \hspace{-2.5mm},W{=}0
\end{cases}
\eeas
\vspace{-3mm}\\
Notice that for $\phi=\frac{dL^1}{dW}$, we get exactly the vector field $F(\cdot)$ defined in theorem statement.

We note simple properties of the mapping $\phi\mapsto{F}_\phi$. 
First, it is linear, since for any vector fields $\phi_1,\phi_2$ and scalar~$c$: $F_{\phi_1{+}\phi_2}{=}F_{\phi_1}{+}F_{\phi_2}$ and $F_{c{\cdot}\phi_1}{=}c{\cdot}{F}_{\phi_1}$.
Second, because of the linearity of line integrals, for any curve~$\Gamma$, the functional $\phi\mapsto\int_{\Gamma}F_\phi$, a mapping of vector fields to scalars, is linear.

We show that~$F(\cdot)$ contradicts the fundamental theorem for line integrals.
To do so, we construct a closed curve~$\Gamma{=}\Gamma_{r,R}$ for which the linear functional $\phi\mapsto\oint_\Gamma{F}_\phi$ does not vanish at $\phi{=}\frac{dL^1}{dW}$. 
Let $\e:=\frac{dL^1}{dW}(W{=}0)/\|\frac{dL^1}{dW}(W{=}0)\|$, which is well-defined since by assumption $\frac{dL^1}{dW}(W{=}0){\neq0}$. 
For $r<R$ we define (see Figure~\ref{fig:curve}):
\vspace{-1mm}
$$\Gamma_{r,R}:=\Gamma_{r,R}^1~\to~\Gamma_{r,R}^2~\to~\Gamma_{r,R}^3~\to~\Gamma_{r,R}^4$$
\vspace{-7mm}\\
where:
\begin{itemize}
\vspace{-3mm}
\item $\Gamma_{r,R}^1$ is the line segment from~$-R\cdot\e$ to~$-r\cdot\e$. 
\vspace{-2.2mm}
\item $\Gamma_{r,R}^2$ is a spherical curve from~$-r\cdot\e$ to~$r\cdot\e$.
\vspace{-2.2mm}
\item $\Gamma_{r,R}^3$ is the line segment from~$r\cdot\e$ to~$R\cdot\e$.
\vspace{-2.2mm}
\item $\Gamma_{r,R}^4$ is a spherical curve from~$R\cdot\e$ to~$-R\cdot\e$.
\vspace{-2.2mm}
\end{itemize}
With the definition of~$\Gamma_{r,R}$ in place, we decompose $\frac{dL^1}{dW}$ into a constant vector field $\kappa\,{\equiv}\,\frac{dL^1}{dW}(W{=}0)$ plus a residual~$\xi$.
We explicitly compute the line integrals along $\Gamma_{r,R}^1\ldots\Gamma_{r,R}^4$ for~$F_\kappa$, and derive bounds for~$F_\xi$.
This, along with the linearity of the functional $\phi\mapsto\int_{\Gamma}F_\phi$, provides a lower bound on the line integral of~$F(\cdot)$ over~$\Gamma_{r,R}$.
We show the lower bound is positive as $r,R\to0$, thus $F(\cdot)$ indeed contradicts the fundamental theorem for line integrals.
\end{proof}

\begin{figure}
\vspace{-2mm}
\begin{center}
\includegraphics[width=0.6\columnwidth]{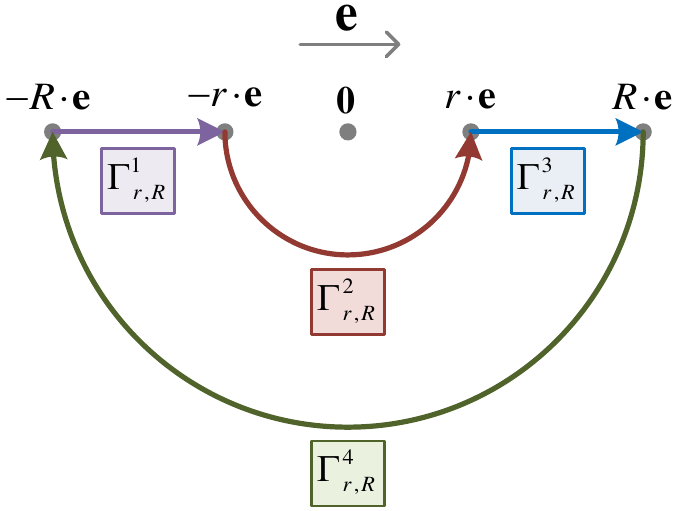}
\end{center}
\vspace{-4mm}
\caption{Curve $\Gamma_{r,R}$ over which line integral is non-zero.}
\label{fig:curve}
\end{figure}

% ILLUSTRATION OF ACCELERATION
\section{Illustration of Acceleration} \label{sec:acceleration}

To this end, we showed that overparameterization (use of depth-$N$ linear network in place of classic linear model) induces on gradient descent a particular preconditioning scheme (Equation~\ref{eq:We_gd} in general and~\ref{eq:We_gd_single} in the single output case), which can be interpreted as introducing some forms of momentum and adaptive learning rate.
We now illustrate qualitatively, on a very simple hypothetical learning problem, the potential of these to accelerate optimization.

Consider the task of linear regression, assigning to vectors in~$\R^2$ labels in~$\R$.
Suppose that our training set consists of two points in $\R^2\times\R$: $([1,0]^\top,y_1)$ and $([0,1]^\top,y_2)$.
Assume also that the loss function of interest is~$\ell_p$, $p\in2\N$: $\ell_p(\hat{y},y)=\frac{1}{p}(\hat{y}-y)^p$.
Denoting the learned parameter by $\w=[w_1,w_2]^\top$, the overall training loss can be written as:\note{
We omit the averaging constant~$\frac{1}{2}$ for conciseness.
}
\vspace{-2mm}
$$L(w_1,w_2)=\tfrac{1}{p}(w_1-y_1)^p+\tfrac{1}{p}(w_2-y_2)^p$$

With fixed learning rate~$\eta>0$ (weight decay omitted for simplicity), gradient descent over~$L(\cdot)$ gives:
\vspace{-1mm}
$$w_i^{(t+1)}\mapsfrom{w}_i^{(t)}-\eta(w_i^{(t)}-y_i)^{p-1}\quad,~i=1,2$$
\vspace{-5mm}\\
Changing variables per $\Delta_i=w_i-y_i$, we have:
\vspace{-1mm}
\be
\Delta_i^{(t+1)}\mapsfrom{\Delta}_i^{(t)}\big(1-\eta(\Delta_i^{(t)})^{p-2}\big)\quad,~i=1,2
\label{eq:illus_gd_delta}
\ee
\vspace{-5mm}\\
Assuming the original weights $w_1$ and~$w_2$ are initialized near zero, $\Delta_1$ and $\Delta_2$ start off at $-y_1$ and~$-y_2$ respectively, and will eventually reach the optimum $\Delta^*_1=\Delta^*_2=0$ if the learning rate is small enough to prevent divergence:
\vspace{-1mm}
$$\eta<\tfrac{2}{y_i^{p-2}}\quad,~i=1,2$$
\vspace{-5mm}\\
Suppose now that the problem is ill-conditioned, in the sense that $y_1{\gg}{y}_2$.
If $p=2$ this has no effect on the bound for~$\eta$.\note{
Optimal learning rate for gradient descent on quadratic objective does not depend on current parameter value~(\cf~\citet{goh2017why}).
}
If $p>2$ the learning rate is determined by~$y_1$, leading~$\Delta_2$ to converge very slowly.
In a sense, $\Delta_2$~will suffer from the fact that there is no ``communication'' between the coordinates (this will actually be the case not just with gradient descent, but with most algorithms typically used in large-scale settings~--~AdaGrad, Adam, \etc).

Now consider the scenario where we optimize~$L(\cdot)$ via overparameterization, \ie~with the update rule in Equation~\ref{eq:We_gd_single} (single output).
In this case the coordinates are coupled, and as~$\Delta_1$ gets small ($w_1$ gets close to $y_1$), the learning rate is effectively scaled by~$y_1^{2-\frac{2}{N}}$ (in addition to a scaling by~$N$ in coordinate~$1$ only), allowing (if $y_1{>}1$) faster convergence of~$\Delta_2$.
We thus have the luxury of temporarily slowing down~$\Delta_2$ to ensure that~$\Delta_1$ does not diverge, with the latter speeding up the former as it reaches safe grounds.
In Appendix~\ref{app:acceleration_bound} we consider a special case and formalize this intuition, deriving a concrete bound for the acceleration~of~overparameterization.

% EXPERIMENTS
\section{Experiments} \label{sec:exp}

Our analysis (Section~\ref{sec:dynamics}) suggests that overparameterization~--~replacement of a classic linear model by a deep linear network, induces on gradient descent a certain preconditioning scheme.
We qualitatively argued (Section~\ref{sec:acceleration}) that in some cases, this preconditioning may accelerate convergence.
In this section we put these claims to the test, through a series of empirical evaluations based on TensorFlow toolbox (\citet{abadi2016tensorflow}).
For conciseness, many of the details behind our implementation are deferred to Appendix~\ref{app:impl}.

We begin by evaluating our analytically-derived preconditioning scheme~--~the end-to-end update rule in Equation~\ref{eq:We_gd}.
Our objective in this experiment is to ensure that our analysis, continuous in nature and based on a particular assumption on weight initialization (Equation~\ref{eq:Wj_agree}), is indeed applicable to practical scenarios.
We focus on the single output case, where the update-rule takes on a particularly simple (and efficiently implementable) form~--~Equation~\ref{eq:We_gd_single}.
The dataset chosen was UCI Machine Learning Repository's ``Gas Sensor Array Drift at Different Concentrations''~\cite{vergara2012chemical,rodriguez2014calibration}.
Specifically, we used the dataset's ``Ethanol'' problem~--~a scalar regression task with~$2565$ examples, each comprising~$128$ features (one of the largest numeric regression tasks in the repository).
As training objectives, we tried both~$\ell_2$ and~$\ell_4$ losses.
Figure~\ref{fig:exp_update_rule} shows convergence (training objective per iteration) of gradient descent optimizing depth-$2$ and depth-$3$ linear networks, against optimization of a single layer model using the respective preconditioning schemes (Equation~\ref{eq:We_gd_single} with~$N=2,3$).
As can be seen, the preconditioning schemes reliably emulate deep network optimization, suggesting that, at least in some cases, our analysis indeed captures practical dynamics.

\begin{figure}
\vspace{-3mm}
\begin{center}
\includegraphics[width=0.49\columnwidth]{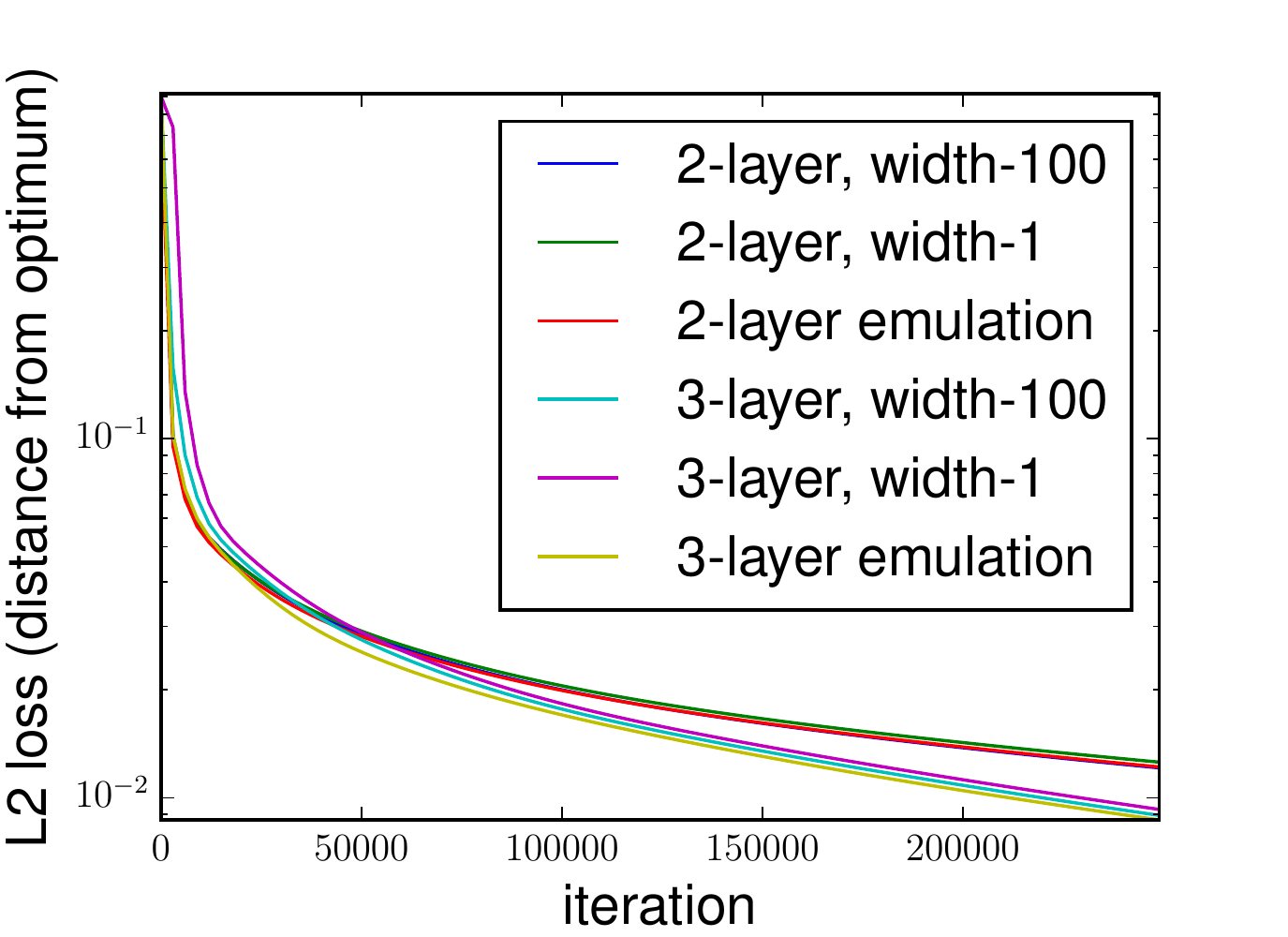}
\includegraphics[width=0.49\columnwidth]{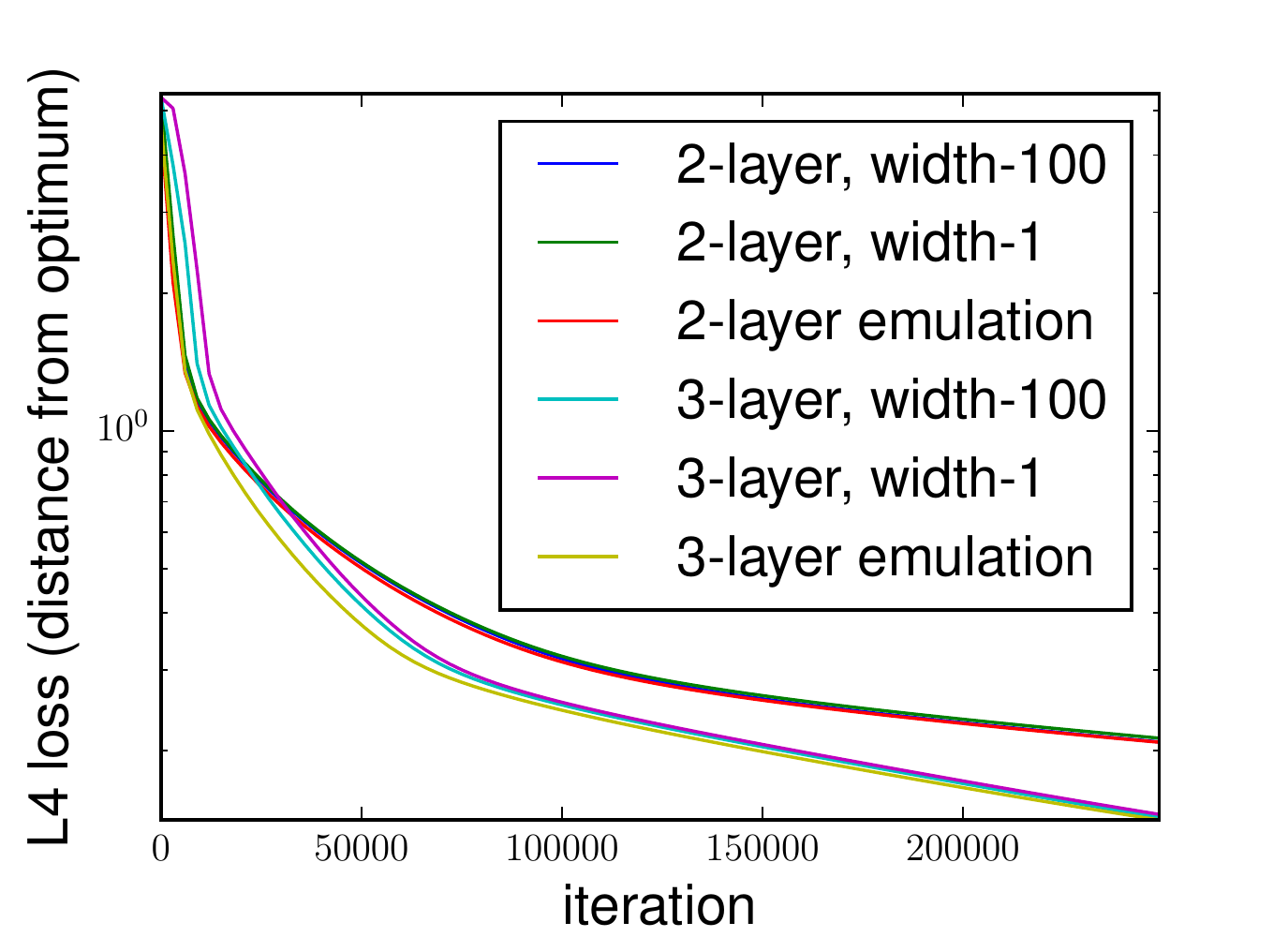}
\end{center}
\vspace{-6mm}
\caption{
(to be viewed in color)~
Gradient descent optimization of deep linear networks (depths~$2,3$) \vs~the analytically-derived equivalent preconditioning schemes (over single layer model; Equation~\ref{eq:We_gd_single}).
Both plots show training objective (left~--~$\ell_2$~loss; right~--~$\ell_4$~loss) per iteration, on a numeric regression dataset from UCI Machine Learning Repository (details in text).
Notice the emulation of preconditioning schemes.
Notice also the negligible effect of network width~--~for a given depth, setting size of hidden layers to~$1$ (scalars) or~$100$ yielded similar convergence (on par with our analysis).
}
\label{fig:exp_update_rule}
\vspace{-3mm}
\end{figure}

Alongside the validity of the end-to-end update rule, Figure~\ref{fig:exp_update_rule} also demonstrates the negligible effect of network width on convergence, in accordance with our analysis (see Section~\ref{sec:dynamics}).
Specifically, it shows that in the evaluated setting, hidden layers of size~$1$ (scalars) suffice in order for the essence of overparameterization to fully emerge.
Unless otherwise indicated, all results reported hereinafter are based on this configuration, \ie~on scalar hidden layers.
The computational toll associated with overparameterization will thus be virtually non-existent.

As a final observation on Figure~\ref{fig:exp_update_rule}, notice that it exhibits faster convergence with a deeper network.
This however does not serve as evidence in favor of acceleration by depth, as we did not set learning rates optimally per model (simply used the common choice of~$10^{-3}$).
To conduct a fair comparison between the networks, and more importantly, between them and a classic single layer model, multiple learning rates were tried, and the one giving fastest convergence was taken on a per-model basis.
Figure~\ref{fig:exp_main} shows the results of this experiment.
As can be seen, convergence of deeper networks is (slightly) slower in the case of~$\ell_2$ loss.
This falls in line with the findings of~\citet{saxe2013exact}.
In stark contrast, and on par with our qualitative analysis in Section~\ref{sec:acceleration}, is the fact that with~$\ell_4$ loss adding depth significantly accelerated convergence.
To the best of our knowledge, this provides first empirical evidence to the fact that depth, even without any gain in expressiveness, and despite introducing non-convexity to a formerly convex problem, can lead to favorable optimization.

\begin{figure}
\vspace{-3mm}
\begin{center}
\includegraphics[width=0.49\columnwidth]{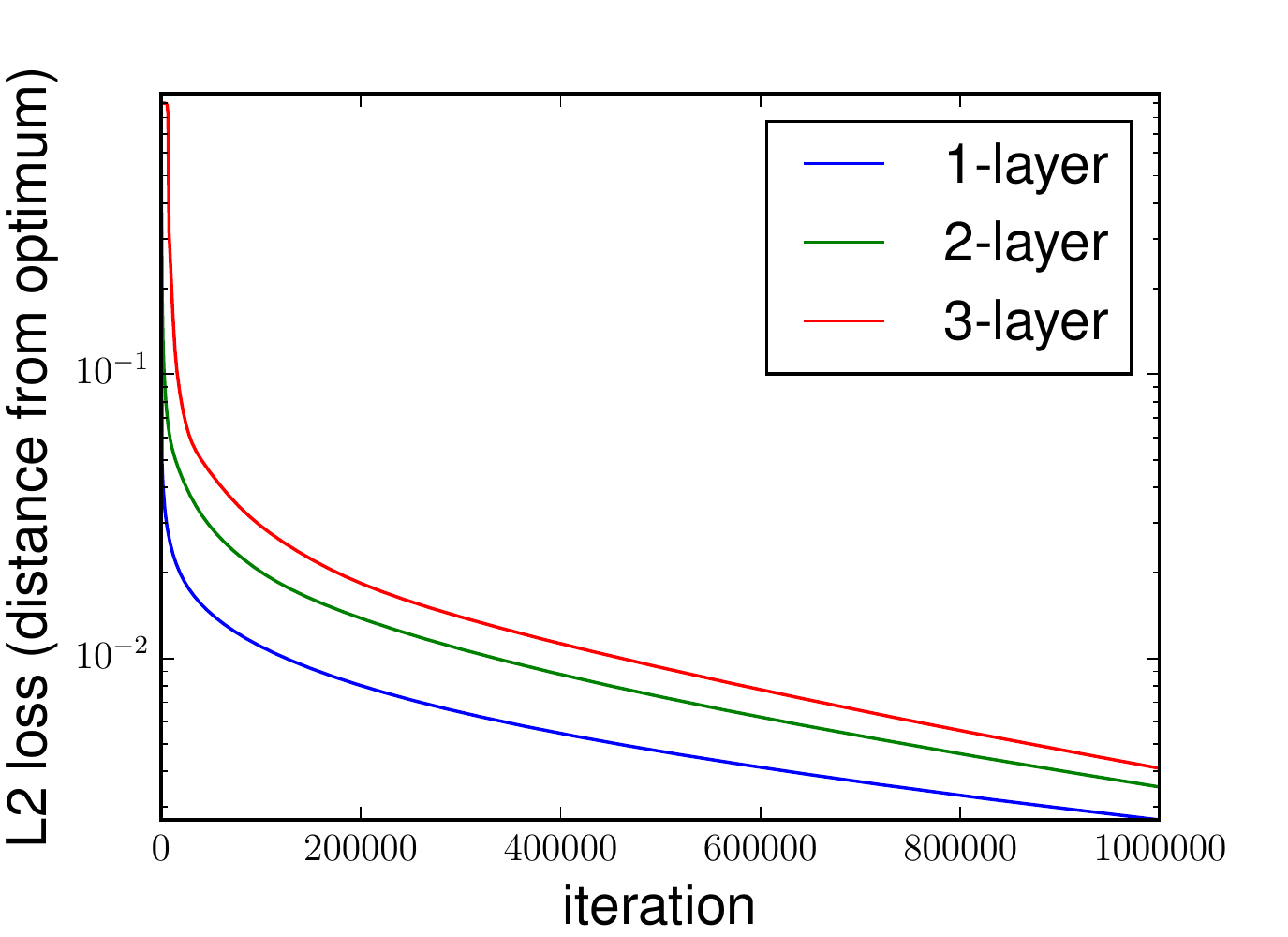}
\includegraphics[width=0.49\columnwidth]{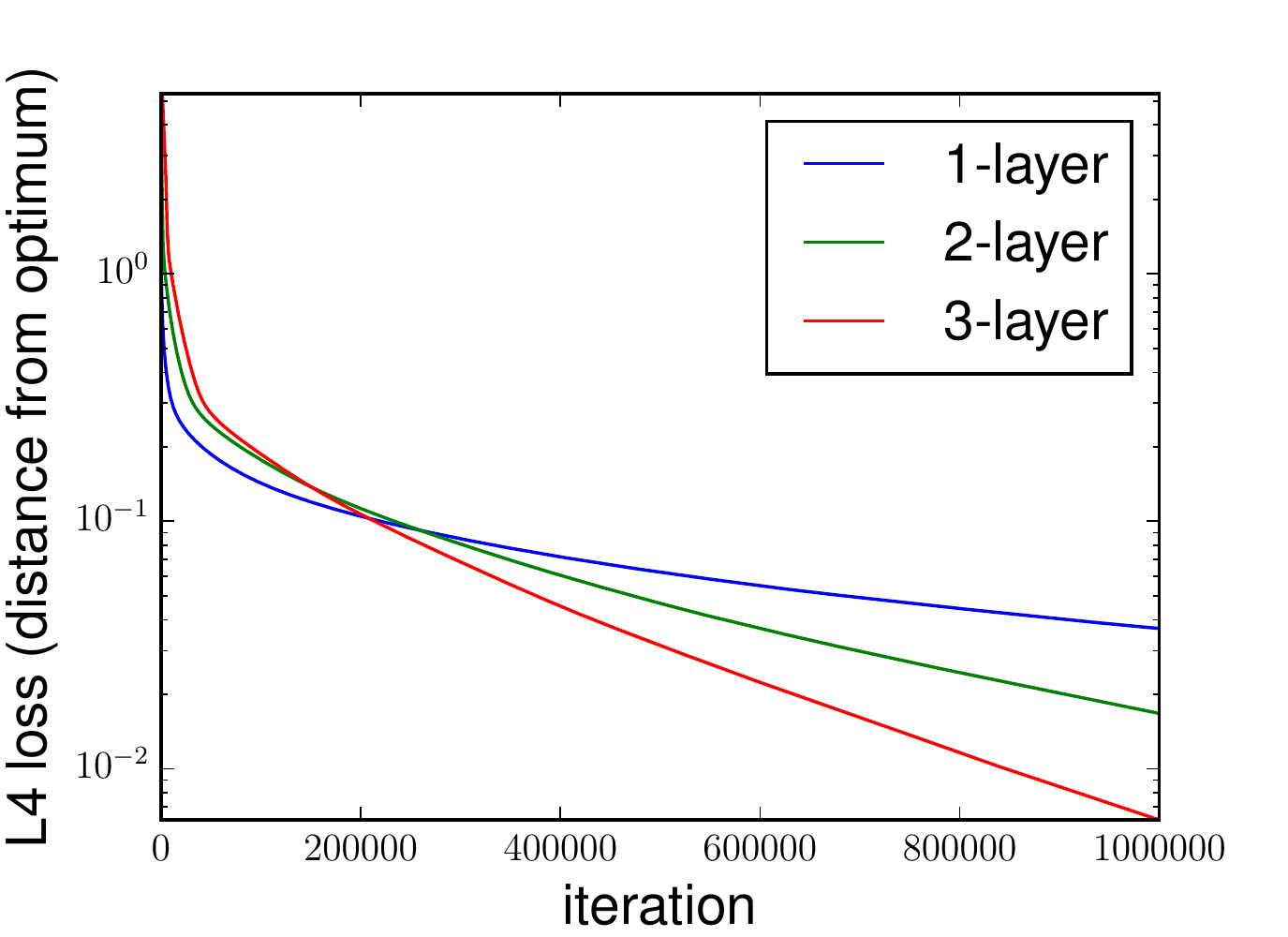}
\end{center}
\vspace{-6mm}
\caption{
(to be viewed in color)~
Gradient descent optimization of single layer model \vs~linear networks of depth~$2$ and~$3$.
Setup is identical to that of Figure~\ref{fig:exp_update_rule}, except that here learning rates were chosen via grid search, individually per model (see Appendix~\ref{app:impl}).
Notice that with $\ell_2$ loss, depth (slightly) hinders optimization, whereas with~$\ell_4$ loss it leads to significant acceleration (on par with our qualitative analysis in Section~\ref{sec:acceleration}).
}
\label{fig:exp_main}
\vspace{-3mm}
\end{figure}

In light of the speedup observed with~$\ell_4$ loss, it is natural to ask how the implicit acceleration of depth compares against explicit methods for acceleration and adaptive learning.
Figure~\ref{fig:exp_ada}-left shows convergence of a depth-$3$ network (optimized with gradient descent) against that of a single layer model optimized with AdaGrad~\cite{duchi2011adaptive} and AdaDelta~\cite{zeiler2012adadelta}.
The displayed curves correspond to optimal learning rates, chosen individually via grid search.
Quite surprisingly, we find that in this specific setting, overparameterizing, thereby turning a convex problem non-convex, is a more effective optimization strategy than carefully designed algorithms tailored for convex problems.
We note that this was not observed with all algorithms~--~for example Adam~\cite{kingma2014adam} was considerably faster than overparameterization.
However, when introducing overparameterization simultaneously with Adam (a setting we did not theoretically analyze), further acceleration is attained~--~see Figure~\ref{fig:exp_ada}-right.
This suggests that at least in some cases, not only plain gradient descent benefits from depth, but also more elaborate algorithms commonly employed in state of the art applications.

\begin{figure}
\vspace{-3mm}
\begin{center}
\includegraphics[width=0.49\columnwidth]{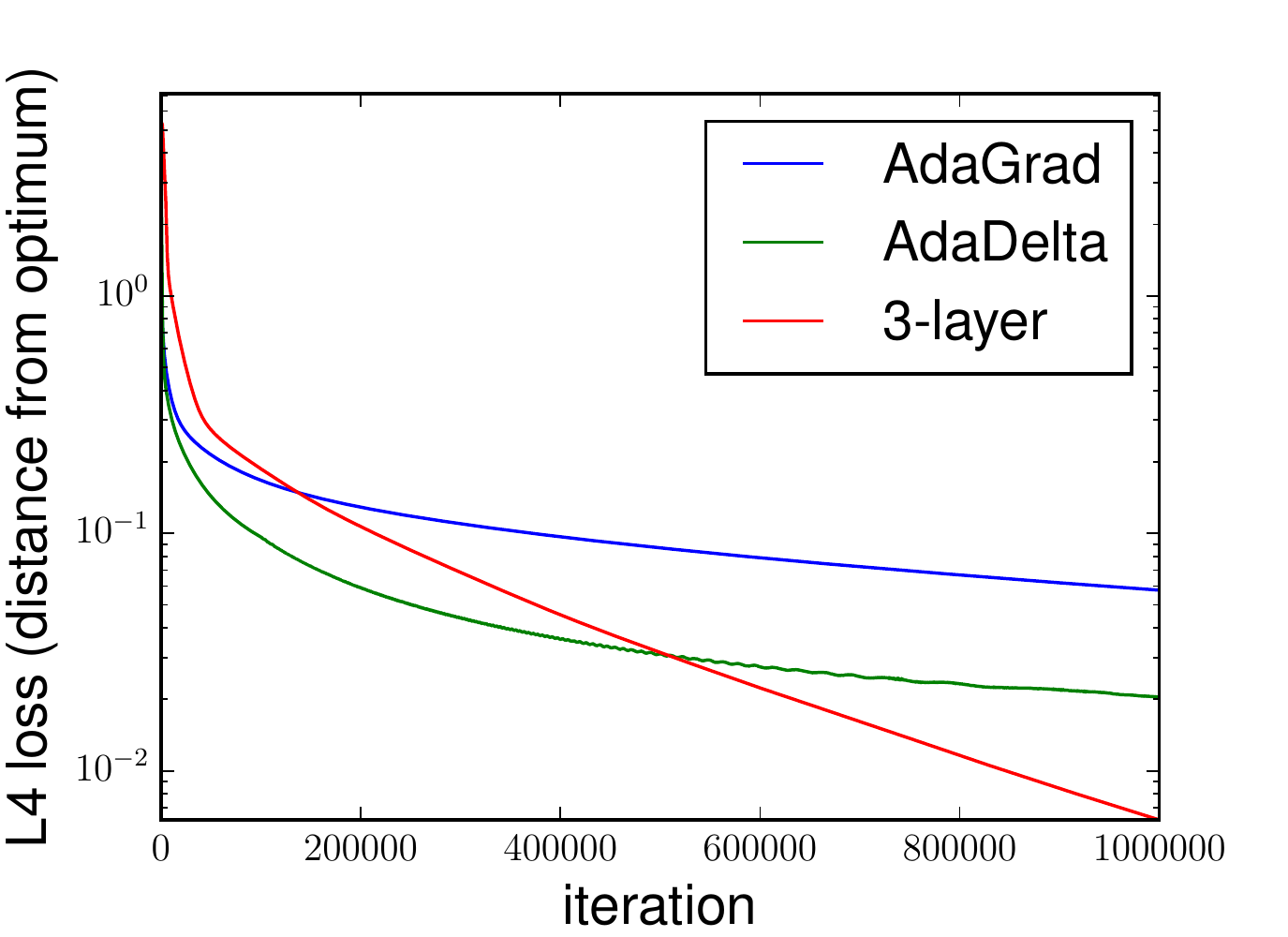}
\includegraphics[width=0.49\columnwidth]{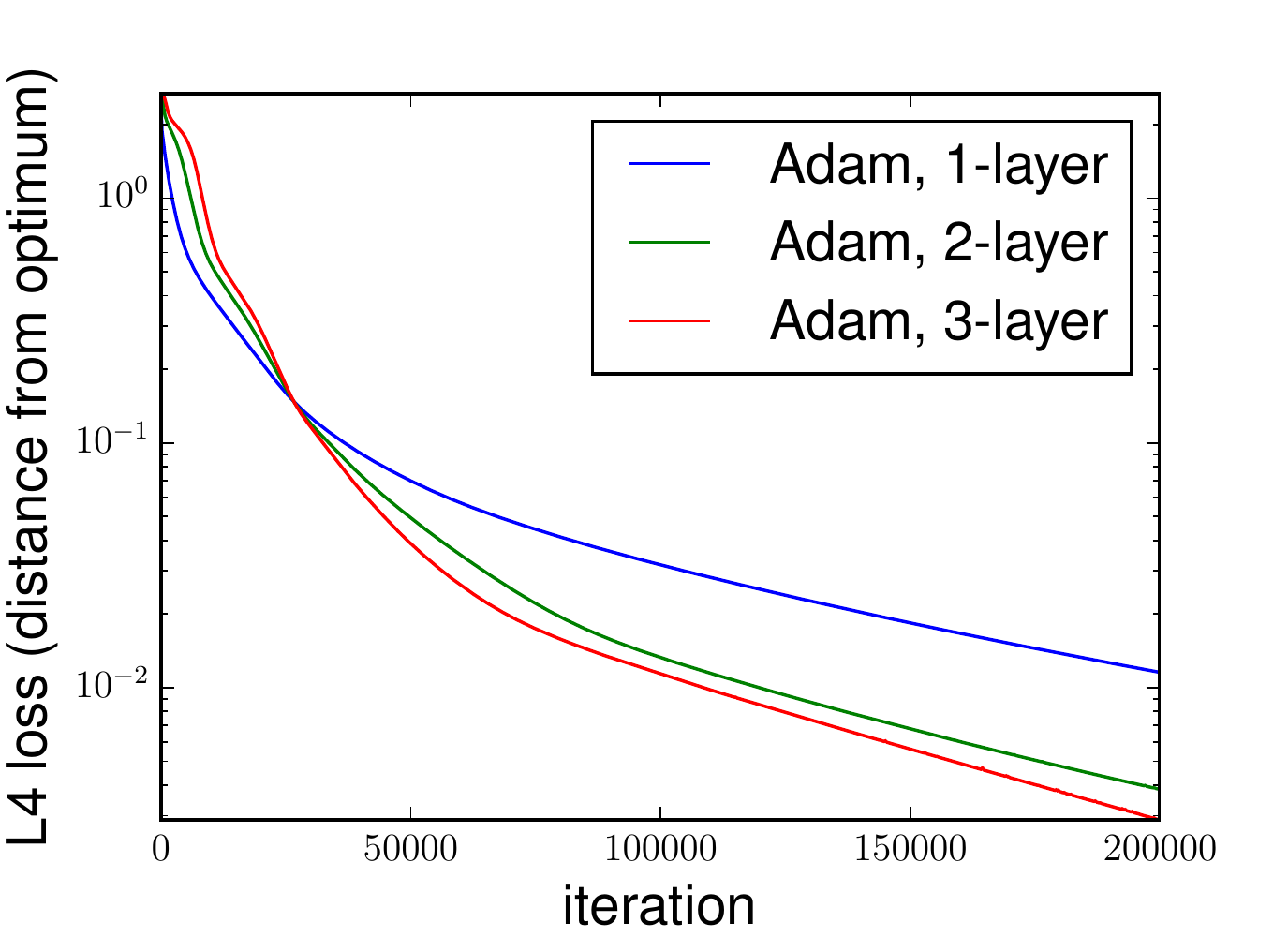}
\end{center}
\vspace{-6mm}
\caption{
(to be viewed in color)~
\textbf{Left:}
Gradient descent optimization of depth-$3$ linear network \vs~AdaGrad and AdaDelta over single layer model.
Setup is identical to that of Figure~\ref{fig:exp_main}-right.
Notice that the implicit acceleration of overparameterization outperforms both AdaGrad and AdaDelta (former is actually slower than plain gradient descent).
\textbf{Right:}
Adam optimization of single layer model \vs~Adam over linear networks of depth~$2$ and~$3$.
Same setup, but with learning rates set per Adam's default in TensorFlow.
Notice that depth improves speed, suggesting that the acceleration of overparameterization may be somewhat orthogonal to explicit acceleration methods.
}
\label{fig:exp_ada}
\vspace{-3mm}
\end{figure}

An immediate question arises at this point.
If depth indeed accelerates convergence, why not add as many layers as one can computationally afford?
The reason, which is actually apparent in our analysis, is the so-called \emph{vanishing gradient problem}.
When training a very deep network (large~$N$), while initializing weights to be small, the end-to-end matrix~$W_e$ (Equation~\ref{eq:We}) is extremely close to zero, severely attenuating gradients in the preconditioning scheme (Equation~\ref{eq:We_gd}).
A possible approach for alleviating this issue is to initialize weights to be larger, yet small enough such that the end-to-end matrix does not ``explode''.
The choice of identity (or near identity) initialization leads to what is known as \emph{linear residual networks}~\cite{hardt2016identity}, akin to the successful residual networks architecture~\cite{he2015deep} commonly employed in deep learning.
Notice that identity initialization satisfies the condition in Equation~\ref{eq:Wj_agree}, rendering the end-to-end update rule (Equation~\ref{eq:We_gd}) applicable.
Figure~\ref{fig:exp_resnet_cnn}-left shows convergence, under gradient descent, of a single layer model against deeper networks than those evaluated before~--~depths~$4$ and~$8$.
As can be seen, with standard, near-zero initialization, the depth-$4$ network starts making visible progress only after about~$65K$ iterations, whereas the depth-$8$ network seems stuck even after~$100K$ iterations.
In contrast, under identity initialization, both networks immediately make progress, and again depth serves as an implicit~accelerator.

As a final sanity test, we evaluate the effect of overparameterization on optimization in a non-idealized (yet simple) deep learning setting.
Specifically, we experiment with the convolutional network tutorial for MNIST built into TensorFlow,\note{
\url{https://github.com/tensorflow/models/tree/master/tutorials/image/mnist}
}
which includes convolution, pooling and dense layers, ReLU non-linearities, stochastic gradient descent with momentum, and dropout~\cite{srivastava2014dropout}.
We introduced overparameterization by simply placing two matrices in succession instead of the matrix in each dense layer.
Here, as opposed to previous experiments, widths of the newly formed hidden layers were not set to~$1$, but rather to the minimal values that do not deteriorate expressiveness (see Appendix~\ref{app:impl}).
Overall, with an addition of roughly~$15\%$ in number of parameters, optimization has accelerated considerably~--~see Figure~\ref{fig:exp_resnet_cnn}-right.
The displayed results were obtained with the hyperparameter settings hardcoded into the tutorial.
We have tried alternative settings (varying learning rates and standard deviations of initializations~--~see Appendix~\ref{app:impl}), and in all cases observed an outcome similar to that in Figure~\ref{fig:exp_resnet_cnn}-right~--~overparameterization led to significant speedup.
Nevertheless, as reported above for linear networks, it is likely that for non-linear networks the effect of depth on optimization is mixed~--~some settings accelerate by it, while others do not.
Comprehensive characterization of the cases in which depth accelerates optimization warrants much further study.
We hope our work will spur interest in this avenue of research.

\begin{figure}
\vspace{-3mm}
\begin{center}
\includegraphics[width=0.49\columnwidth]{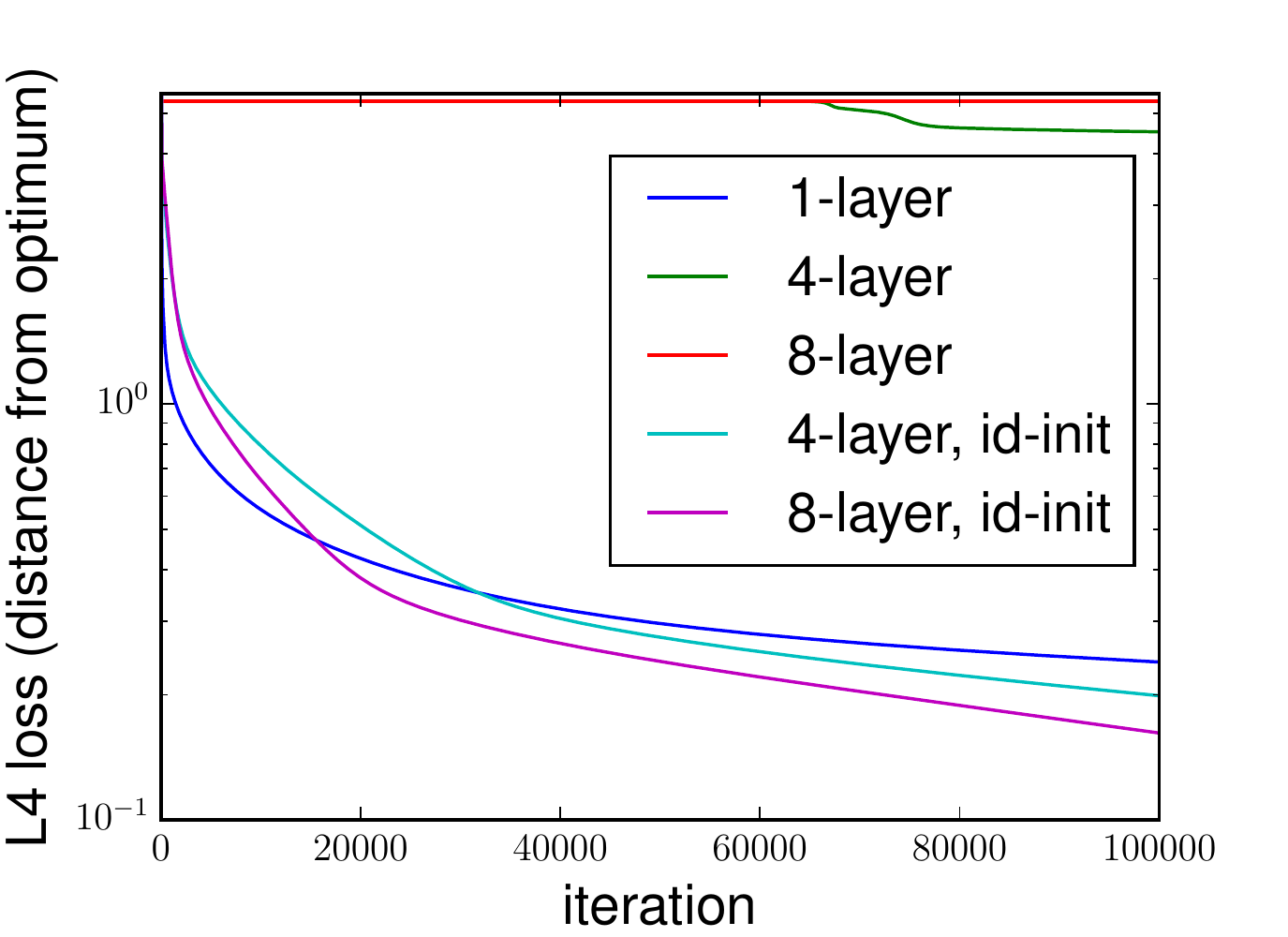}
\includegraphics[width=0.49\columnwidth]{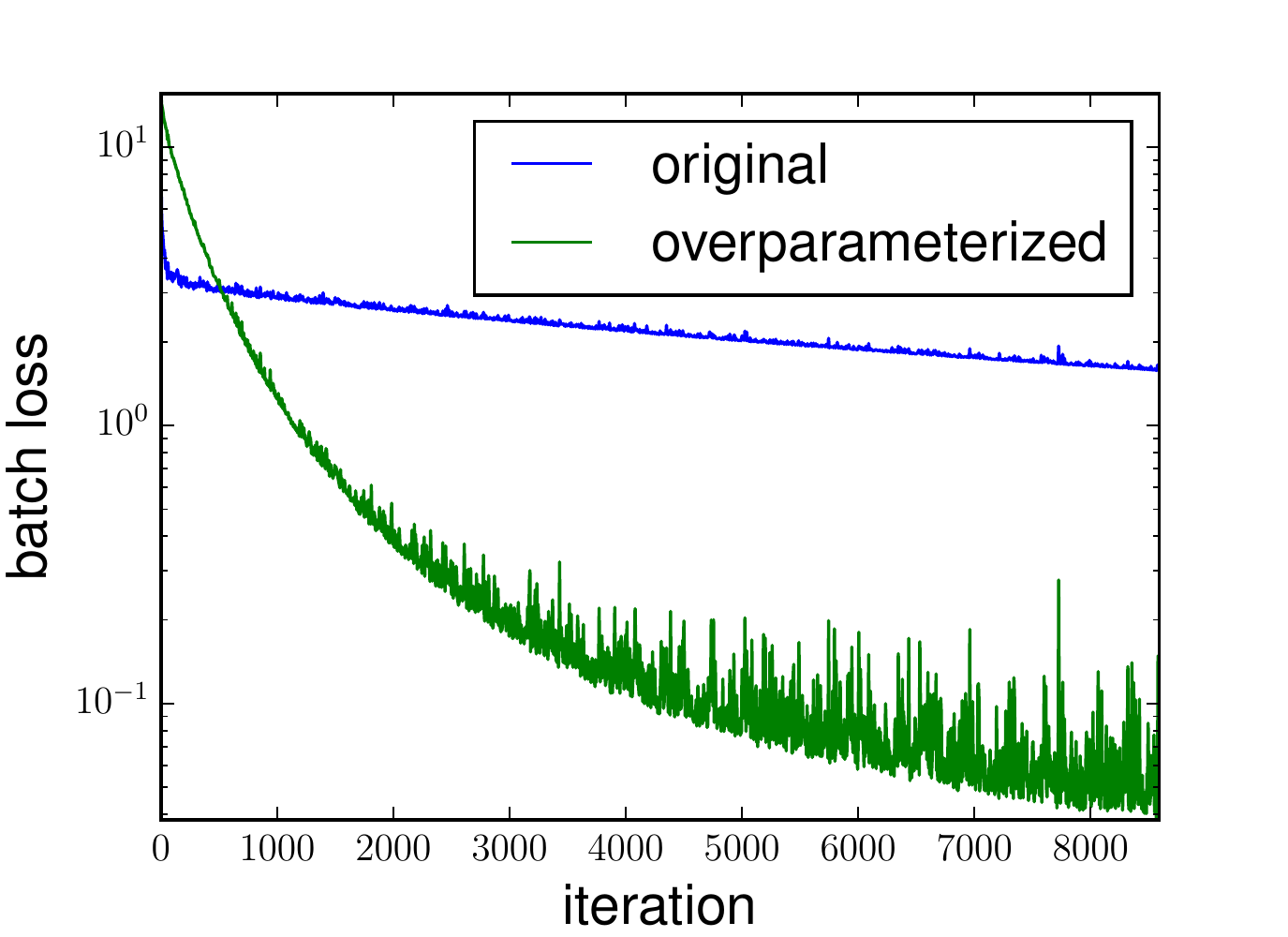}
\end{center}
\vspace{-6mm}
\caption{
(to be viewed in color)~
\textbf{Left:}
Gradient descent optimization of single layer model \vs~linear networks deeper than before (depths~$4,8$).
For deep networks, both near-zero and near-identity initializations were evaluated.
Setup identical to that of Figure~\ref{fig:exp_main}-right.
Notice that deep networks suffer from vanishing gradients under near-zero initialization, while near-identity (``residual'') initialization eliminates the problem.
\textbf{Right:}
Stochastic gradient descent optimization in TensorFlow's convolutional network tutorial for MNIST.
Plot shows batch loss per iteration, in original setting \vs~overparameterized one (depth-$2$ linear networks in place of dense layers).
}
\label{fig:exp_resnet_cnn}
\vspace{-3mm}
\end{figure}

% CONCLUSION
\section{Conclusion} \label{sec:conc}
\vspace{-1mm}

Through theory and experiments, we demonstrated that overparameterizing a neural network by increasing its depth can accelerate optimization, even on very simple problems.

Our analysis of linear neural networks, the subject of various recent studies, yielded a new result: for these models, overparameterization by depth can be understood as a preconditioning scheme with a closed form description (Theorem~\ref{theorem:We_gf} and the claims thereafter).
The preconditioning may be interpreted as a combination between certain forms of adaptive learning rate and momentum.
Given that it depends on network depth but not on width, acceleration by overparameterization can be attained at a minimal computational price, as we demonstrate empirically in Section~\ref{sec:exp}.

Clearly, complete theoretical analysis for non-linear networks will be challenging.
Empirically however, we showed that the trivial idea of replacing an internal weight matrix by a product of two can significantly accelerate optimization, with absolutely no effect on expressiveness (Figure~\ref{fig:exp_resnet_cnn}-right).

The fact that gradient descent over classic convex problems such as linear regression with $\ell_p$~loss, $p>2$, can accelerate from transitioning to a non-convex overparameterized objective, does not coincide with conventional wisdom, and provides food for thought.
Can this effect be rigorously quantified, similarly to analyses of explicit acceleration methods such as momentum or adaptive regularization (AdaGrad)?

% ACKNOWLEDGMENTS
\newcommand{\acknowledgments}
{Sanjeev Arora's work is supported by NSF, ONR, Simons Foundation, Schmidt Foundation, Mozilla Research, Amazon Research, DARPA and SRC.
Elad Hazan's work is supported by NSF grant 1523815 and Google Brain.
Nadav Cohen is a member of the Zuckerman Israeli Postdoctoral Scholars Program, and is supported by Eric and Wendy~Schmidt.}
\ifdefined\COLT
	\acks{\acknowledgments}
\else
	\ifdefined\CAMREADY
		\vspace{-2mm}
		\section*{Acknowledgments}
		\vspace{-2mm}
		\acknowledgments
	\fi
\fi

% REFERENCES
\section*{References}
{\small
\ifdefined\ICML
	\bibliographystyle{icml2018}
\else
	\bibliographystyle{plainnat}
\fi
\bibliography{refs.bib}
}

% APPENDIXES
\clearpage
\appendix

% ENDNOTES
\ifdefined\NOTESAPP
	\theendnotes
\fi

% DEFERRED PROOFS
\section{Deferred Proofs} \label{app:proofs}

  % END-TO-END DIFFERENTIAL EQUATION
\subsection{Proof of Theorem~\ref{theorem:We_gf}} \label{app:proofs:We_gf}

Before delving into the proof, we introduce notation that will admit a more compact presentation of formulae.
For~$1\leq{a}\leq{b}\leq{N}$, we denote:
\beas
&&\prod\nolimits^{j=b}_{a}W_j~~:=W_{b}W_{b-1}\cdots{W}_a \\
&&\prod\nolimits_{j=a}^{b}W_j^\top:=W^\top_{a}W^\top_{a+1}\cdots{W}^\top_b
\eeas
where~$W_1\ldots{W}_N$ are the weight matrices of the depth-$N$ linear network (Equation~\ref{eq:lnn}).
If~$a>b$, then by definition both $\prod^{j=b}_{a}W_j$ and $\prod_{j=a}^{b}W_j^\top$ are identity matrices, with size depending on context, \ie~on the dimensions of matrices they are multiplied against.
Given any square matrices (possibly scalars) $A_1,A_2,\ldots,A_m$, we denote by $diag(A_1\ldots{A}_m)$ a block-diagonal matrix holding them on its diagonal:
$$diag(A_1\ldots{A}_m) = 
\begin{bmatrix} 
A_1 & 0 & 0 & 0 \\[-0.25em]
0 & \ddots & 0 & 0 \\[0.25em]
0 & 0 & A_m & 0 \\[0.25em]
0 & 0 & 0 & 0 
\end{bmatrix}$$
As illustrated above, $diag(A_1\ldots{A}_m)$ may hold additional, zero-valued rows and columns beyond~$A_1\ldots{A}_m$.
Conversely, it may also trim (omit) rows and columns, from its bottom and right ends respectively, so long as only zeros are being removed.
The exact shape of $diag(A_1\ldots{A}_m)$ is again determined by context, and so if~$B$ and~$C$ are matrices, the expression $B\cdot{diag}(A_1\ldots{A}_m)\cdot{C}$ infers a number of rows equal to the number of columns in~$B$, and a number of columns equal to the number of rows in~$C$.

\medskip

Turning to the actual proof, we disregard the trivial case $N=1$, and begin by noticing that Equation~\ref{eq:lnn_loss_oprm}, along with the definition of~$W_e$ (Equation~\ref{eq:We}), imply that for every $j=1\ldots{N}$:
$$\frac{\partial{L^N}}{\partial{W_j}}(W_1,{\ldots},W_N)=\prod_{i=j+1}^{N}W_i^\top\cdot\frac{dL^1}{dW}(W_e)\cdot\prod_{i=1}^{j-1}W_i^\top$$
Plugging this into the differential equations of gradient descent (Equation~\ref{eq:Wj_gf}), we get:
\bea
\dot{W}_j(t)=-\eta\lambda{W}_j(t)
~~\quad\qquad\qquad\qquad\qquad\qquad\qquad
\label{eq:Wj_gf_L1}\\
-\eta\prod_{i=j+1}^{N}W_i^\top(t)\cdot\frac{dL^1}{dW}(W_e(t))\cdot\prod_{i=1}^{j-1}W_i^\top(t)
\nonumber\\[1mm]
,~j=1{\ldots}N
\quad
\nonumber
\eea
For $j=1\ldots{N}{-}1$, multiply the~$j$'th equation by~$W_j^\top(t)$ from the right, and the~$j{+}1$'th equation by~$W_{j+1}^\top(t)$ from the left.
This yields:
\beas
&W_{j+1}^\top(t)\dot{W}_{j+1}(t)+\eta\lambda\cdot{W}_{j+1}^\top(t)W_{j+1}(t)=& \\
&\dot{W}_j(t)W_j^\top(t)+\eta\lambda\cdot{W}_j(t)W_j^\top(t)&
\eeas
Taking the transpose of these equations and adding to themselves, we obtain, for every~$j=1\ldots{N}{-}1$:
\bea
&W_{j+1}^\top(t)\dot{W}_{j+1}(t)+\dot{W}_{j+1}^\top(t)W_{j+1}(t)+& 
\nonumber\\
&2\eta\lambda\cdot{W}_{j+1}^\top(t)W_{j+1}(t)=& 
\nonumber\\
&\dot{W}_j(t)W_j^\top(t)+W_j(t)\dot{W}_j^\top(t)+& 
\nonumber\\
&2\eta\lambda\cdot{W}_j(t)W_j^\top(t)&
\label{eq:Wj_Wjt_de}
\eea
Denote for $j=1\ldots{N}$:
$$C_j(t):=W_j(t)W_j^\top(t)~~~,~~~C'_j(t):=W_j^\top(t)W_j(t)$$
Equation~\ref{eq:Wj_Wjt_de} can now be written as:
\beas
\dot{C}'_{j+1}(t)+2\eta\lambda\cdot{C}'_{j+1}(t)=\dot{C}_j(t)+2\eta\lambda\cdot{C}_j(t) \\[1mm]
,~j=1{\ldots}N-1
\eeas
Turning to Lemma~\ref{lemma:coincide} below, while recalling our assumption for time~$t_0$ (Equation~\ref{eq:Wj_agree}):
$$C'_{j+1}(t_0)=C_j(t_0)\quad,~j=1{\ldots}N-1$$
we conclude that, throughout the entire time-line:
$$C'_{j+1}(t)=C_j(t)\quad,~j=1{\ldots}N-1$$
Recollecting the definitions of~$C_j(t),C'_j(t)$, this means:
\be
W_{j{+}1}^\top(t)W_{j{+}1}(t)=W_j(t)W_j^\top(t)~~,~j=1{\ldots}N{-}1
\label{eq:Wj_Wjt}
\ee

\medskip

Regard~$t$ now as fixed, and for every $j=1\ldots{N}$, let:
\be
W_j(t)=U_j\Sigma_{j}V_j^\top
\label{eq:Wj_svd}
\ee
be a singular value decomposition.
That is to say, $U_j$ and~$V_j$ are orthogonal matrices, and~$\Sigma_j$ is a rectangular-diagonal matrix holding non-decreasing, non-negative singular values on its diagonal.
Equation~\ref{eq:Wj_Wjt} implies that for $j=1\ldots{N}{-}1$:
$$V_{j+1}\Sigma_{j+1}^\top\Sigma_{j+1}V_{j+1}^\top=U_j\Sigma_j\Sigma_j^\top{U}_j^\top$$
For a given~$j$, the two sides of the above equation are both orthogonal eigenvalue decompositions of the same matrix.
The square-diagonal matrices~$\Sigma_{j+1}^\top\Sigma_{j+1}$ and~$\Sigma_j\Sigma_j^\top$ are thus the same, up to a possible permutation of diagonal elements (eigenvalues).
However, since by definition $\Sigma_{j+1}$ and~$\Sigma_j$ have non-increasing diagonals, it must hold that~$\Sigma_{j+1}^\top\Sigma_{j+1}=\Sigma_j\Sigma_j^\top$.
Let~$\rho_1{>}\rho_2{>}\cdots{>}\rho_m{\geq}0$ be the distinct eigenvalues, with corresponding multiplicities~$d_1,d_2,\ldots,d_m\in\N$.
We may write:
\be
\Sigma_{j+1}^\top\Sigma_{j+1}=\Sigma_j\Sigma_j^\top=diag(\rho_{1}I_{d_1},\ldots,\rho_{m}I_{d_m})
\label{eq:block_eig}
\ee
where~$I_{d_r}$, $1{\leq}r{\leq}m$, is the identity matrix of size $d_r\times{d}_r$.
Moreover, there exist orthogonal matrices~$O_{j,r}\in\R^{d_r,d_r}$, $1{\leq}r{\leq}m$, such that:
$$U_j=V_{j+1}\cdot{diag}(O_{j,1},\ldots,O_{j,m})$$
$O_{j,r}$~here is simply a matrix changing between orthogonal bases in the eigenspace of~$\rho_r$~--~it maps the basis comprising $V_{j+1}$-columns to that comprising~$U_j$-columns.
Recalling that both $\Sigma_j$ and~$\Sigma_{j+1}$ are rectangular-diagonal, holding only non-negative values, Equation~\ref{eq:block_eig} implies that each of these matrices is equal to $diag(\sqrt{\rho_{1}}{\cdot}I_{d_1},\ldots,\sqrt{\rho_{m}}{\cdot}I_{d_m})$.
Note that the matrices generally do not have the same shape and thus, formally, are not equal to one another.
Nonetheless, in line with our $diag$ notation (see beginning of this subsection), $\Sigma_j$ and~$\Sigma_{j+1}$ may differ from each other only in trailing, zero-valued rows and columns.
By an inductive argument, all the singular value matrices~$\Sigma_1,\Sigma_2,\ldots,\Sigma_N$ (see Equation~\ref{eq:Wj_svd}) are equal up to trailing zero rows and columns.
The fact that $\rho_1\ldots\rho_m$ do not include an index~$j$ in their notation is thus in order, and we may write, for every $j=1\ldots{N}{-}1$:
\beas
W_j(t)&=&U_j\Sigma_{j}V_j^\top \\
&=&V_{j+1}\cdot{diag}(O_{j,1},\ldots,O_{j,m})\cdot \\
&&~~~\qquad{diag}(\sqrt{\rho_{1}}{\cdot}I_{d_1},\ldots,\sqrt{\rho_{m}}{\cdot}I_{d_m})\cdot{V}_j^\top
\eeas
For the~$N$'th weight matrix we have:
\beas
W_N(t)&=&U_N\Sigma_{N}V_N^\top \\
&=&U_N\cdot{diag}(\sqrt{\rho_{1}}{\cdot}I_{d_1},\ldots,\sqrt{\rho_{m}}{\cdot}I_{d_m})\cdot{V}_N^\top
\eeas
Concatenations of weight matrices thus simplify as follows:
\bea
&\prod_{j}^{i=N}W_i(t)\prod_{i=j}^{N}W_i^\top(t)=& 
\label{eq:Wj_train1}\\
&U_N\cdot{diag}\Big((\rho_{1})^{N-j+1}{\cdot}I_{d_1},\ldots,(\rho_{m})^{N-j+1}{\cdot}I_{d_m}\Big)\cdot{U}_N^\top& 
\nonumber\\[1em]
&\prod_{i=1}^{j}W_i^\top(t)\prod_{1}^{i=j}W_i(t)=& 
\label{eq:Wj_train2}\\
&V_1\cdot{diag}\Big((\rho_{1})^j{\cdot}I_{d_1},\ldots,(\rho_{m})^j{\cdot}I_{d_m}\Big)\cdot{V}_1^\top& 
\nonumber\\[1em]
&,~j=1\ldots{N}&
\nonumber
\eea
where we used the orthogonality of~$O_{j,r}$, and the obvious fact that it commutes with~$I_{d_r}$.
Consider Equation~\ref{eq:Wj_train1} with~$j=1$ and Equation~\ref{eq:Wj_train2} with~$j=N$, while recalling that by definition $W_e(t)=\prod^{i=N}_{1}W_j(t)$:
\beas
W_e(t)W_e^\top(t)=U_N{\cdot}diag\Big((\rho_{1})^{N}I_{d_1},\ldots,(\rho_{m})^{N}I_{d_m}\Big){\cdot}U_N^\top \\
W_e^\top(t)W_e(t)=V_1{\cdot}diag\Big((\rho_{1})^{N}I_{d_1},\ldots,(\rho_{m})^{N}I_{d_m}\Big){\cdot}V_1^\top
\eeas
It follows that for every $j=1\ldots{N}$:
\bea
\prod_{j}^{i=N}W_i(t)\prod_{i=j}^{N}W_i^\top(t)=\left[W_e(t)W_e^\top(t)\right]^{\frac{N-j+1}{N}} 
\label{eq:Wj_train1_We}\\
\prod_{i=1}^{j}W_i^\top(t)\prod_{1}^{i=j}W_i(t)=\left[W_e^\top(t)W_e(t)\right]^{\frac{j}{N}} 
\qquad
\label{eq:Wj_train2_We}
\eea
where~$[\cdot]^{\frac{N-j+1}{N}}$ and~$[\cdot]^{\frac{j}{N}}$ stand for fractional power operators defined over positive semidefinite matrices.

\medskip

With Equations~\ref{eq:Wj_train1_We} and~\ref{eq:Wj_train2_We} in place, we are finally in a position to complete the proof.
Returning to Equation~\ref{eq:Wj_gf_L1}, we multiply $\dot{W}_j(t)$ from the left by $\prod^{i=N}_{j+1}W_i(t)$ and from the right by $\prod^{i=j-1}_{1}W_i(t)$, followed by summation over $j=1\ldots{N}$.
This gives:
\beas
\sum\nolimits_{j=1}^{N}\left(\prod\nolimits^{i=N}_{j+1}W_i(t)\right)\dot{W}_j(t)\left(\prod\nolimits^{i=j-1}_{1}W_i(t)\right)= 
\qquad\\
-\eta\lambda\sum\nolimits_{j=1}^{N}\left(\prod\nolimits^{i=N}_{j+1}W_i(t)\right)W_j(t)\left(\prod\nolimits^{i=j-1}_{1}W_i(t)\right) 
~~\\
-\eta\sum\nolimits_{j=1}^{N}\left(\prod\nolimits^{i=N}_{j+1}W_i(t)\prod\nolimits_{i=j+1}^{N}W_i^\top(t)\right)\cdot
~\qquad\qquad\\
\frac{dL^1}{dW}(W_e(t))\cdot\left(\prod\nolimits_{i=1}^{j-1}W_i^\top(t)\prod\nolimits^{i=j-1}_{1}W_i(t)\right)
\eeas
By definition $W_e(t)=\prod^{i=N}_{1}W_j(t)$, so we can substitute the first two lines above:
\beas
\dot{W}_e(t)=-\eta\lambda{N}\cdot{W}_e(t) 
\qquad\qquad\qquad\qquad\qquad\qquad\qquad\\
-\eta\sum_{j=1}^{N}\left(\prod\nolimits^{i=N}_{j+1}W_i(t)\prod\nolimits_{i=j+1}^{N}W_i^\top(t)\right)\cdot
\quad\qquad\qquad\\
\frac{dL^1}{dW}(W_e(t))\cdot\left(\prod\nolimits_{i=1}^{j-1}W_i^\top(t)\prod\nolimits^{i=j-1}_{1}W_i(t)\right)
\eeas
Finally, plugging in the relations in Equations~\ref{eq:Wj_train1_We} and~\ref{eq:Wj_train2_We}, the sought-after result is revealed:
\beas
\dot{W}_e(t)&=&-\eta\lambda{N}\cdot{W}_e(t) \\
&&-\eta\sum_{j=1}^{N}\left[W_e(t)W_e^\top(t)\right]^{\frac{N-j}{N}}\cdot \\
&&\qquad\qquad\frac{dL^1}{dW}(W_e(t))\cdot\left[W_e^\top(t)W_e(t)\right]^{\frac{j-1}{N}}
\eeas

\qed

\bigskip

\begin{lemma}
\label{lemma:coincide}
Let~$I\subset\R$ be a connected interval, and let $f,g:I\to\R$ be differentiable functions.
Suppose that there exists a constant $\alpha\geq0$ for which:
$$\dot{f}(t)+\alpha\cdot{f}(t)=\dot{g}(t)+\alpha\cdot{g}(t)\quad,~\forall{t}\in{I}$$
Then, if~$f$ and~$g$ assume the same value at some $t_0\in{I}$ (interior or boundary), they must coincide along the entire interval, \ie~it must hold that $f(t)=g(t)$ for all~$t\in{I}$.
\end{lemma}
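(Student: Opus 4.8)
The plan is to reduce the statement to a homogeneous linear first-order ODE and dispatch it with the integrating-factor trick. First I would set $h:=f-g$, which is differentiable on $I$; subtracting the two hypothesized identities gives $\dot{h}(t)+\alpha\cdot{h}(t)=0$ for every $t\in I$, and the assumption that $f$ and $g$ agree at $t_0$ becomes $h(t_0)=0$. It then suffices to show $h\equiv 0$ on $I$.

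Next I would introduce the auxiliary function $\phi(t):=e^{\alpha{t}}h(t)$. It is differentiable on $I$, and by the product rule $\dot{\phi}(t)=e^{\alpha{t}}\big(\dot{h}(t)+\alpha\cdot{h}(t)\big)=0$ for all $t\in I$. Since $I$ is a connected interval, a function whose derivative vanishes identically on $I$ is constant there (mean value theorem on the interior, together with the continuity that differentiability provides up to any endpoints of $I$). Evaluating at $t_0$ gives $\phi(t_0)=e^{\alpha{t_0}}h(t_0)=0$, hence $\phi\equiv 0$ on $I$; dividing by the nowhere-vanishing factor $e^{\alpha{t}}$ yields $h\equiv 0$, i.e. $f(t)=g(t)$ for all $t\in I$.

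I do not expect a genuine obstacle here: the argument is elementary. The only point meriting a moment's care is that $t_0$ may be a boundary point of $I$ rather than an interior point, but this is harmless, since the constancy of $\phi$ is established on all of $I$ independently of where its value is normalized. (As an aside, the hypothesis $\alpha\geq 0$ is not used — the proof works verbatim for any real $\alpha$ — but there is no harm in stating the lemma as is, since $\alpha=2\eta\lambda\geq 0$ in the intended application.)
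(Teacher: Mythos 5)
Your proof is correct, and it takes a genuinely different route from the one in the paper. The paper's argument proceeds by contradiction: assuming $h := f-g$ is nonzero somewhere, it considers the zero set $S$ of $h$, uses its closedness and compactness of $S\cap[t_0,t_2]$ to locate a ``last zero'' $t_1$ before a point $t_2$ where $h$ is positive, then solves the ODE $\dot h/h = -\alpha$ on the open interval $(t_1,t_2]$ to conclude $h$ is bounded away from zero there, contradicting continuity at $t_1$. Your integrating-factor argument --- passing to $\phi(t)=e^{\alpha t}h(t)$, computing $\dot\phi\equiv 0$, and invoking that a function with identically vanishing derivative on a connected interval is constant --- reaches the same conclusion in a single stroke, with no case analysis, no contradiction, and no topological machinery beyond the mean value theorem. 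It is the standard Gr\"onwall/uniqueness-style argument, and it scales cleanly to the matrix-valued setting in which the lemma is actually applied in the paper (one can take $\Phi(t)=e^{\alpha t}H(t)$ componentwise, or observe the lemma is applied entrywise). Your aside that $\alpha\geq 0$ is never used is also correct and applies equally to the paper's proof; the hypothesis merely reflects the intended application $\alpha=2\eta\lambda$.
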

\begin{proof}
Define~$h:=f-g$.
$h$~is a differentiable function from~$I$ to~$\R$, and we have: 
\be
\dot{h}(t)=-\alpha\cdot{h}(t)\quad,~\forall{t}\in{I}
\label{eq:h_de}
\ee
We know that~$h(t_0)=0$ for some~$t_0\in{I}$, and would like to show that~$h(t)=0~\forall{t}\in{I}$.
Assume by contradiction that this is not the case, so there exists~$t_2\in{I}$ for which $h(t_2)\neq0$.
Without loss of generality, suppose that $h(t_2)>0$, and that $t_2>t_0$.
Let~$S$ be the zero set of~$h$, \ie~$S:=\{t\in{I}:h(t)=0\}$.
Since $h$ is continuous in~$I$, $S$~is topologically closed, therefore its intersection with the interval $[t_0,t_2]$ is compact.
Denote by~$t_1$ the maximal element in this intersection, and consider the interval $J:=[t_1,t_2]\subset{I}$.
By construction, $h$~is positive along~$J$, besides on the endpoint~$t_1$ where it assumes the value of zero.
For $t_1<t\leq{t}_2$, we may solve as follows the differential equation of~$h$ (Equation~\ref{eq:h_de}):
$$\frac{\dot{h}(t)}{h(t)}=-\alpha \quad\implies\quad h(t)=\beta{e}^{-\alpha{t}}$$
where~$\beta$ is the positive constant defined by~$h(t_2)=\beta{e}^{-\alpha{t_2}}$.
Since in particular~$h$ is bounded away from zero on~$(t_1,t_2]$, and assumes zero at~$t_1$, we obtain a contradiction to its continuity.
This completes the proof.
\end{proof}

  % END-TO-END UPDATE RULE - VECTORIZED
\subsection{Proof of Claim~\ref{claim:We_gd_vec}} \label{app:proofs:We_gd_vec}

Our proof relies on the \emph{Kronecker product} operation for matrices.
For arbitrary matrices~$A$ and~$B$ of sizes $m_a\times{n}_a$ and $m_b\times{n}_b$ respectively, the Kronecker product $A\odot{B}$ is defined to be the following block matrix:
\be
A\odot{B}:=\begin{bmatrix} 
a_{11}{\cdot}B & \cdots & a_{1n_{a}}{\cdot}B \\[0.25em]
\vdots & \ddots & \vdots \\[0.25em]
a_{m_{a}1}{\cdot}B & \cdots & a_{m_{a}n_{a}}{\cdot}B
\end{bmatrix}
\in\R^{m_{a}m_{b},n_{a}n_{b}}
\label{eq:kron}
\ee
where~$a_{ij}$ stands for the element in row~$i$ and column~$j$ of~$A$.
The Kronecker product admits numerous useful properties.
We will employ the following:
\begin{itemize}
\item
If~$A$ and~$B$ are matrices such that the matrix product $AB$ is defined, then:
\bea
vec(AB)&=&(B^\top\odot{I}_{r_A})\cdot{vec}(A)
\nonumber\\[1mm]
&=&(I_{c_B}\odot{A})\cdot{vec}(B)
\label{eq:kron_vec}
\eea
where~$I_{r_A}$ and~$I_{c_B}$ are the identity matrices whose sizes correspond, respectively, to the number of rows in~$A$ and the number of columns in~$B$.
$vec(\cdot)$~here, as in claim statement, stands for matrix vectorization in column-first order.
\item
If $A_1$, $A_2$, $B_1$ and~$B_2$ are matrices such that the matrix products $A_{1}B_1$ and~$A_{2}B_2$ are defined, then:
\be
(A_1\odot{A}_2)(B_1\odot{B}_2)=(A_{1}B_1)\odot(A_{2}B_2)
\label{eq:kron_prod}
\ee
\item
For any matrices $A$ and~$B$:
\be
(A\odot{B})^\top=A^\top\odot{B}^\top
\label{eq:kron_tpose}
\ee
\item
Equation~\ref{eq:kron_prod} and~\ref{eq:kron_tpose} imply, that if~$A$ and~$B$ are some orthogonal matrices, so is~$A\odot{B}$:
\bea
A^\top=A^{-1}~~\wedge~~B^\top=B^{-1} 
\qquad\qquad\qquad\qquad\nonumber\\
\implies (A\odot{B})^\top=(A\odot{B})^{-1}
\qquad
\label{eq:kron_orth}
\eea
\end{itemize}

\medskip

With the Kronecker product in place, we proceed to the actual proof.
It suffices to show that vectorizing:
$$\sum_{j=1}^N\left[W_e^{(t)}(W_e^{(t)})^\top\right]^\frac{j-1}{N}\cdot
\frac{dL^1}{dW}(W_e^{(t)})\cdot
\left[(W_e^{(t)})^\top{W}_e^{(t)}\right]^\frac{N-j}{N}$$
yields:
$$P_{W_e^{(t)}}\cdot{vec}\left(\frac{dL^1}{dW}(W_e^{(t)})\right)$$
where ${P}_{W_e^{(t)}}$ is the preconditioning matrix defined in claim statement.
For notational conciseness, we hereinafter omit the iteration index~$t$, and simply write $W_e$ instead of $W_e^{(t)}$.

Let~$I_d$ and~$I_k$ be the identity matrices of sizes $d\times{d}$ and~$k\times{k}$ respectively.
Utilizing the properties of the Kronecker product, we have:
\beas
vec\left(\sum_{j=1}^N\left[W_{e}W_e^\top\right]^\frac{j-1}{N}\frac{dL^1}{dW}(W_e)\left[W_e^\top{W}_e\right]^\frac{N-j}{N}\right) 
\quad\qquad\\
=\sum_{j=1}^N\left(I_d\odot\left[W_{e}W_e^\top\right]^\frac{j-1}{N}\right)\cdot
\quad\qquad\qquad\qquad\qquad\qquad\\
\left(\left[W_e^\top{W}_e\right]^\frac{N-j}{N}\odot{I}_k\right)\cdot{vec}\left(\frac{dL^1}{dW}(W_e)\right)
\quad\qquad\\
=\sum_{j=1}^N\left(\left[W_e^\top{W}_e\right]^\frac{N-j}{N}\odot\left[W_{e}W_e^\top\right]^\frac{j-1}{N}\right)vec\left(\frac{dL^1}{dW}(W_e)\right)
\eeas
The first equality here makes use of Equation~\ref{eq:kron_vec}, and the second of Equation~\ref{eq:kron_prod}.
We will show that the matrix:
\be
Q:=\sum_{j=1}^N\left[W_e^\top{W}_e\right]^\frac{N-j}{N}\odot\left[W_{e}W_e^\top\right]^\frac{j-1}{N}
\label{eq:Q}
\ee
meets the characterization of~${P}_{W_e}$, thereby completing the proof.
Let:
$$W_e=UDV^\top$$
be a singular value decomposition, \ie~$U\in\R^{k,k}$ and~$V\in\R^{d,d}$ are orthogonal matrices, and~$D$ is a rectangular-diagonal matrix holding (non-negative) singular values on its diagonal.
Plug this into the definition of~$Q$ (Equation~\ref{eq:Q}):
\beas
Q=\sum_{j=1}^N\left[VD^{\top}DV^{\top}\right]^\frac{N-j}{N}\odot\left[UDD^{\top}U^{\top}\right]^\frac{j-1}{N} 
\qquad\qquad\qquad\\
=\sum_{j=1}^N\left(V\left[D^{\top}D\right]^\frac{N-j}{N}V^{\top}\right)\odot\left(U\left[DD^{\top}\right]^\frac{j-1}{N}U^{\top}\right) 
~~\qquad\\
=\sum_{j=1}^N(V\odot{U})\left(\left[D^{\top}D\right]^\frac{N-j}{N}\odot\left[DD^{\top}\right]^\frac{j-1}{N}\right)(V^{\top}\odot{U}^{\top}) 
~~~\\
=(V\odot{U})\left(\sum_{j=1}^N\left[D^{\top}D\right]^\frac{N-j}{N}\odot\left[DD^{\top}\right]^\frac{j-1}{N}\right)(V\odot{U})^\top
\quad
\eeas
The third equality here is based on the relation in Equation~\ref{eq:kron_prod}, and the last equality is based on Equation~\ref{eq:kron_tpose}.
Denoting:
\bea
O&:=&V\odot{U} 
\label{eq:O}\\
\Lambda&:=&\sum_{j=1}^N\left[D^{\top}D\right]^\frac{N-j}{N}\odot\left[DD^{\top}\right]^\frac{j-1}{N}
\label{eq:Lambda}
\eea
we have:
\be
Q=O\Lambda{O}^\top
\label{eq:Q_evd}
\ee
Now, since by definition $U$ and~$V$ are orthogonal, $O$~is orthogonal as well (follows from the relation in Equation~\ref{eq:kron_orth}).
Additionally, the fact that~$D$ is rectangular-diagonal implies that the square matrix $\Lambda$ is also diagonal.
Equation~\ref{eq:Q_evd} is thus an orthogonal eigenvalue decomposition of~$Q$.
Finally, denote the columns of~$U$ (left singular vectors of~$W_e$) by $\uu_1\ldots\uu_k$, those of~$V$ (right singular 
vectors of~$W_e$) by $\vv_1\ldots\vv_d$, and the diagonal elements of~$D$ (singular values of~$W_e$) by $\sigma_1\ldots\sigma_{\max\{k,d\}}$ (by definition $\sigma_r=0$ if $r>\min\{k,d\}$).
The definitions in Equations~\ref{eq:O} and~\ref{eq:Lambda} imply that the columns of~$O$ are:
$$vec(\uu_r\vv_{r'}^\top)\quad,r=1\ldots{k}~,~r'=1\ldots{d}$$
with corresponding diagonal elements in~$\Lambda$ being:
$$\sum\nolimits_{j=1}^{N}\sigma_r^{2\frac{N-j}{N}}\sigma_{r'}^{2\frac{j-1}{N}}\quad,r=1\ldots{k}~,~r'=1\ldots{d}$$
We conclude that~$Q$ indeed meets the characterization of~$P_{W_e}$ in claim statement.
This completes the proof.

\qed

  % END-TO-END UPDATE RULE - SINGLE OUTPUT
\subsection{Proof of Claim~\ref{claim:We_gd_single}} \label{app:proofs:We_gd_single}

We disregard the trivial case~$N=1$, as well as the scenario $W_e^{(t)}=0$ (both lead Equations~\ref{eq:We_gd} and~\ref{eq:We_gd_single} to equate).
Omitting the iteration index~$t$ from our notation, it suffices to show that:
\bea
\sum_{j=1}^N\left[W_{e}W_e^\top\right]^\frac{j-1}{N}\cdot\frac{dL^1}{dW}(W_e)\cdot\left[W_e^\top{W}_e\right]^\frac{N-j}{N}=
\quad
\label{eq:We_gd_terms}\\
\norm{W_e}_{2}^{2-\frac{2}{N}}\left(\tfrac{dL^1}{dW}(W_e)+(N-1)Pr_{W_e}\big\{\tfrac{dL^1}{dW}(W_e)\big\}\right)
\nonumber
\eea
where $Pr_{W_e}\{\cdot\}$ is the projection operator defined in claim statement (Equation~\ref{eq:proj}), and we recall that by assumption~$k=1$ ($W_e\in\R^{1,d}$).
$\left[W_{e}W_e^\top\right]^\frac{j-1}{N}$~is a scalar, equal to~$\norm{W_e}_2^{2\frac{j-1}{N}}$ for every $j=1\ldots{N}$.
$\left[W_e^\top{W}_e\right]^\frac{N-j}{N}$~on the other hand is a $d\times{d}$ matrix, by definition equal to identity for $j=N$, and otherwise, for $j=1\ldots{N}-1$, it is equal to $\norm{W_e}_2^{2\frac{N-j}{N}}\left(\nicefrac{W_e}{\norm{W_e}_2}\right)^\top\left(\nicefrac{W_e}{\norm{W_e}_2}\right)$.
Plugging these equalities into the first line of Equation~\ref{eq:We_gd_terms} gives:
\beas
\sum_{j=1}^N\left[W_{e}W_e^\top\right]^\frac{j-1}{N}\frac{dL^1}{dW}(W_e)\left[W_e^\top{W}_e\right]^\frac{N-j}{N}= 
\qquad\\
\sum_{j=1}^{N-1}\norm{W_e}_2^{2\frac{j-1}{N}}\frac{dL^1}{dW}(W_e)\norm{W_e}_2^{2\frac{N-j}{N}}\left(\tfrac{W_e}{\norm{W_e}_2}\right)^\top\left(\tfrac{W_e}{\norm{W_e}_2}\right) \\
+\norm{W_e}_2^{2\frac{N-1}{N}}\cdot\frac{dL^1}{dW}(W_e)= 
\qquad\qquad\qquad\\[1mm]
(N-1)\norm{W_e}_2^{2\frac{N-1}{N}}\frac{dL^1}{dW}(W_e)\left(\tfrac{W_e}{\norm{W_e}_2}\right)^\top\left(\tfrac{W_e}{\norm{W_e}_2}\right) 
\quad\\
+\norm{W_e}_2^{2\frac{N-1}{N}}\cdot\frac{dL^1}{dW}(W_e)
~~\quad\qquad\qquad\qquad
\eeas
The latter expression is precisely the second line of Equation~\ref{eq:We_gd_terms}, thus our proof is complete.

\qed

  % END-TO-END STEP NOT GRADIENT
\subsection{Proof of Theorem~\ref{theorem:impossible}} \label{app:proofs:impossible}

Our proof relies on elementary differential geometry: curves, arc length and line integrals (see Chapters~8 and~9 in~\citet{buck2003advanced}).

Let~$\U\subset\R^{1,d}$ be a neighborhood of $W=0$ (\ie~an open set that includes this point) on which $\frac{dL^1}{dW}$ is continuous ($\U$~exists by assumption).
It is not difficult to see that $F(\cdot)$ (Equation~\ref{eq:F}) is continuous on~$\U$ as well.
The strategy of our proof will be to show that $F(\cdot)$ does not admit the \emph{gradient theorem} (also known as the \emph{fundamental theorem for line integrals}).
According to the theorem, if $h:\U\to\R$ is a continuously differentiable function, and $\Gamma$ is a piecewise smooth curve lying in~$\U$ with start-point~$\gamma_s$ and end-point~$\gamma_e$, then:
$$\int_\Gamma\frac{dh}{dW}=h(\gamma_e)-h(\gamma_s)$$
In words, the line integral of the gradient of~$h$ over~$\Gamma$, is equal to the difference between the value taken by~$h$ at the end-point of~$\Gamma$, and that taken at the start-point.
A direct implication of the theorem is that if $\Gamma$ is closed ($\gamma_e=\gamma_s$), the line integral vanishes:
$$\oint_{\Gamma}\frac{dh}{dW}=0$$
We conclude that if~$F(\cdot)$ is the gradient field of some function, its line integral over any closed (piecewise smooth) curve lying in~$\U$ must vanish.
We will show that this is not the case.

\medskip

For notational conciseness we hereinafter identify $\R^{1,d}$ and~$\R^d$, so in particular $\U$ is now a subset of~$\R^d$.
To further simplify, we omit the subindex from the Euclidean norm, writing~$\norm{\cdot}$ instead of~$\norm{\cdot}_2$.
Given an arbitrary continuous vector field $\phi:\U\to\R^d$, we define a respective (continuous) vector field as follows:
\bea
F_\phi:\U\to\R^d 
\,~\qquad\qquad\qquad\qquad
\nonumber\\[1em]
F_\phi(\w)= 
\quad\qquad\qquad\qquad\qquad
\label{eq:F_phi}\\[0.25em]
\begin{cases}
\hspace{-0.5mm}\norm{\w}^{2{-}\frac{2}{N}}\hspace{-1mm}\left(\phi(\w){+}(N{-}1)\hspace{-0.5mm}\inprod{\phi(\w)}{\frac{\w}{\norm{\w}}}\hspace{-0.5mm}\frac{\w}{\norm{\w}}\right) 
& \hspace{-2.5mm},\w{\neq}\0 \\
\qquad\qquad\qquad\qquad\qquad\0 
& \hspace{-2.5mm},\w{=}\0
\end{cases}
\nonumber
\eea
Notice that for $\phi=\frac{dL^1}{dW}$, we get exactly the vector field $F(\cdot)$ defined in theorem statement (Equation~\ref{eq:F})~--~the subject of our inquiry.
As an operator on (continuous) vector fields, the mapping $\phi\mapsto{F}_\phi$ is linear.\note{
For any $\phi_1,\phi_2:\U\to\R^d$ and $c\in\R$, it holds that $F_{\phi_1+\phi_2}=F_{\phi_1}+F_{\phi_2}$ and $F_{c\cdot\phi_1}=c\cdot{F}_{\phi_1}$.
}
This, along with the linearity of line integrals, imply that for any piecewise smooth curve~$\Gamma$ lying in~$\U$, the functional $\phi\mapsto\int_{\Gamma}F_\phi$, a mapping of (continuous) vector fields to scalars, is linear.
Lemma~\ref{lemma:line_int_ub} below provides an upper bound on this linear functional in terms of the length of~$\Gamma$, its maximal distance from origin, and the maximal norm $\phi$ takes on it.

In light of the above, to show that $F(\cdot)$ contradicts the gradient theorem, thereby completing the proof, it suffices to find a closed (piecewise smooth) curve~$\Gamma$ for which the linear functional $\phi\mapsto\oint_\Gamma{F}_\phi$ does not vanish at $\phi=\frac{dL^1}{dW}$.
By assumption $\frac{dL^1}{dW}(W{=}0)\neq0$, and so we may define the unit vector in the direction of~$\frac{dL^1}{dW}(W{=}0)$:
\be
\e:=\frac{\frac{dL^1}{dW}(W{=}0)}{\norm{\frac{dL^1}{dW}(W{=}0)}}\in\R^d
\label{eq:e}
\ee
Let~$R$ be a positive constant small enough such that the Euclidean ball of radius~$R$ around the origin is contained in~$\U$.
Let~$r$ be a positive constant smaller than~$R$.
Define~$\Gamma_{r,R}$ to be a curve as follows (see illustration in Figure~\ref{fig:curve}):\note{
The proof would have been slightly simplified had we used a curve that passes directly through the origin.
We avoid this in order to emphasize that the result is not driven by some point-wise singularity (the origin received special treatment in the definition of $F(\cdot)$~--~see Equations~\ref{eq:F} and~\ref{eq:proj}).
}
\bea
\Gamma_{r,R}:=\Gamma_{r,R}^1~\to~\Gamma_{r,R}^2~\to~\Gamma_{r,R}^3~\to~\Gamma_{r,R}^4
\label{eq:curve}
\eea
where:
\begin{itemize}
\item $\Gamma_{r,R}^1$ is the line segment from~$-R\cdot\e$ to~$-r\cdot\e$.
\item $\Gamma_{r,R}^2$ is a geodesic on the sphere of radius~$r$, starting from~$-r\cdot\e$ and ending at~$r\cdot\e$.
\item $\Gamma_{r,R}^3$ is the line segment from~$r\cdot\e$ to~$R\cdot\e$.
\item $\Gamma_{r,R}^4$ is a geodesic on the sphere of radius~$R$, starting from~$R\cdot\e$ and ending at~$-R\cdot\e$.
\end{itemize}
$\Gamma_{r,R}$ is a piecewise smooth, closed curve that fully lies within~$\U$.
Consider the linear functional it induces: $\phi\mapsto\oint_{\Gamma_{r,R}}{F}_\phi$.
We will evaluate this functional on $\phi=\frac{dL^1}{dW}$.
To do so, we decompose the latter as follows:
\be
\tfrac{dL^1}{dW}(\cdot)=c\cdot\e(\cdot)+\xi(\cdot)
\label{eq:grad_decomp}
\ee
where:
\begin{itemize}
\item 
$c$ is a scalar equal to $\|\tfrac{dL^1}{dW}(W{=}0)\|$.
\item 
$\e(\cdot)$ is a vector field returning the constant~$\e$ (Equation~\ref{eq:e}).
\item 
$\xi(\cdot)$ is a vector field returning the values of $\frac{dL^1}{dW}(\cdot)$ shifted by the constant $-\tfrac{dL^1}{dW}(W{=}0)$.
It is continuous on~$\U$ and vanishes at the origin.
\end{itemize}
Applying Lemma~\ref{lemma:line_int_ub} to $\xi$ over~$\Gamma_{r,R}$ gives:
\beas
\abs{\oint_{\Gamma_{r,R}}{F}_\xi}\leq{N}\cdot{len}(\Gamma_{r,R})\cdot\max_{\gamma\in\Gamma_{r,R}}\norm{\gamma}^{2{-}\frac{2}{N}}\cdot\max_{\gamma\in\Gamma_{r,R}}\norm{\xi(\gamma)} \\
=N\cdot(\pi{r}+\pi{R}+2(R-r))\cdot{R}^{2{-}\frac{2}{N}}\cdot\max_{\gamma\in\Gamma_{r,R}}\norm{\xi(\gamma)} \\
\leq{N}\cdot2\pi\cdot{R}^{3{-}\frac{2}{N}}\cdot\max_{\gamma\in\Gamma_{r,R}}\norm{\xi(\gamma)}
\,~\qquad\qquad\qquad\qquad
\eeas
On the other hand, by Lemma~\ref{lemma:line_int_comp}:
$$\oint_{\Gamma_{r,R}}{F}_\e=\left(\frac{2N}{3-\nicefrac{2}{N}}-2\right)\left(R^{3{-}\frac{2}{N}}-r^{3{-}\frac{2}{N}}\right)$$
The linearity of the functional $\phi\mapsto\oint_{\Gamma_{r,R}}{F}_\phi$, along with Equation~\ref{eq:grad_decomp}, then imply:
\beas
\oint_{\Gamma_{r,R}}{F}_{\frac{dL^1}{dW}}&=&c\cdot\oint_{\Gamma_{r,R}}{F}_\e+\oint_{\Gamma_{r,R}}{F}_\xi \\
&\geq&{c}\cdot\left(\frac{2N}{3-\nicefrac{2}{N}}-2\right)\left(R^{3{-}\frac{2}{N}}-r^{3{-}\frac{2}{N}}\right) \\
&&~-N\cdot2\pi\cdot{R}^{3{-}\frac{2}{N}}\cdot\max_{\gamma\in\Gamma_{r,R}}\norm{\xi(\gamma)}
\eeas
We will show that for proper choices of~$R$ and~$r$, the lower bound above is positive.
$\Gamma_{r,R}$~will then be a piecewise smooth closed curve lying in~$\U$, for which the functional $\phi\mapsto\oint_{\Gamma_{r,R}}{F}_\phi$ does not vanish at $\phi=\frac{dL^1}{dW}$.
As stated, this will imply that $F(\cdot)$ violates the gradient theorem, thereby concluding our proof.

\medskip

All that is left is to affirm that the expression:
\beas
&c\cdot\left(\frac{2N}{3-\nicefrac{2}{N}}-2\right)\left(R^{3{-}\frac{2}{N}}-r^{3{-}\frac{2}{N}}\right)& \\
&-N\cdot2\pi\cdot{R}^{3{-}\frac{2}{N}}\cdot\max_{\gamma\in\Gamma_{r,R}}\norm{\xi(\gamma)}&
\eeas
can indeed be made positive with proper choices of~$R$ and~$r$.
Recall that:
\begin{itemize}
\item $N>2$ by assumption; implies $\frac{2N}{3-\nicefrac{2}{N}}-2>0$.
\item $R$ is any positive constant small enough such that the ball of radius~$R$ around the origin is contained in~$\U$.
\item $r$ is any positive constant smaller than~$R$.
\item $\Gamma_{r,R}$ is a curve whose points are all within distance~$R$ from the origin.
\item $c=\|\tfrac{dL^1}{dW}(W{=}0)\|$~--~positive by assumption.
\item $\xi(\cdot)$ is a vector field that is continuous on~$\U$ and vanishes at the origin.
\end{itemize}
The following procedure gives $R$ and~$r$ as required:
\begin{itemize}
\item Set $r$ to follow $R$ such that: $r^{3{-}\frac{2}{N}}=0.5\cdot{R}^{3{-}\frac{2}{N}}$.
\item Choose $\epsilon>0$ for which $0.5c\left(\frac{2N}{3-\frac{2}{N}}{-}2\right)-2\pi{N}\epsilon>0$.
\item Set $R$ to be small enough such that $\norm{\xi(\w)}\leq\epsilon$ for any point $\w$ within distance~$R$ from the origin.
\end{itemize}
The proof is complete.

\qed

\bigskip

\begin{lemma}
\label{lemma:line_int_ub}
Let $\phi:\U\to\R^d$ be a continuous vector field, and let $\Gamma$ be a piecewise smooth curve lying in~$\U$.
Consider the (continuous) vector field $F_\phi:\U\to\R^d$ defined in Equation~\ref{eq:F_phi}.
The line integral of the latter over~$\Gamma$ is bounded as follows:
$$\abs{\int_\Gamma{F}_\phi}\leq{N}\cdot{len}(\Gamma)\cdot\max_{\gamma\in\Gamma}\norm{\gamma}^{2{-}\frac{2}{N}}\cdot\max_{\gamma\in\Gamma}\norm{\phi(\gamma)}$$
where $len(\Gamma)$ is the arc length of~$\Gamma$, and $\gamma\in\Gamma$ refers to a point lying on the curve.
\end{lemma}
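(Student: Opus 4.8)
The plan is to bound the line integral $\int_\Gamma F_\phi$ pointwise along the curve and then integrate the pointwise bound against arc length. First I would parametrize $\Gamma$ by a piecewise smooth map $\gamma:[0,1]\to\U$, so that by definition $\int_\Gamma F_\phi = \int_0^1 \inprod{F_\phi(\gamma(s))}{\dot\gamma(s)}\,ds$. Then I would estimate $\norm{F_\phi(\w)}$ for an arbitrary $\w\in\U$: from the defining formula (Equation~\ref{eq:F_phi}), for $\w\neq\0$,
\[
\norm{F_\phi(\w)} \le \norm{\w}^{2-\frac{2}{N}}\left(\norm{\phi(\w)} + (N-1)\,\abs{\inprod{\phi(\w)}{\tfrac{\w}{\norm{\w}}}}\right)
\le N\cdot\norm{\w}^{2-\frac{2}{N}}\cdot\norm{\phi(\w)},
\]
where the last step uses Cauchy--Schwarz, $\abs{\inprod{\phi(\w)}{\w/\norm{\w}}}\le\norm{\phi(\w)}$. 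At $\w=\0$ the bound holds trivially since $F_\phi(\0)=\0$.

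Next I would combine this with the elementary estimate $\abs{\int_\Gamma F_\phi} = \abs{\int_0^1 \inprod{F_\phi(\gamma(s))}{\dot\gamma(s)}\,ds} \le \int_0^1 \norm{F_\phi(\gamma(s))}\cdot\norm{\dot\gamma(s)}\,ds$, again by Cauchy--Schwarz inside the integral. Plugging in the pointwise bound gives
\[
\abs{\int_\Gamma F_\phi} \le \int_0^1 N\cdot\norm{\gamma(s)}^{2-\frac{2}{N}}\cdot\norm{\phi(\gamma(s))}\cdot\norm{\dot\gamma(s)}\,ds.
\]
Then I would replace $\norm{\gamma(s)}^{2-2/N}$ and $\norm{\phi(\gamma(s))}$ by their suprema over the curve, pulling these constants out of the integral, and recognize the remaining $\int_0^1 \norm{\dot\gamma(s)}\,ds$ as precisely $\mathrm{len}(\Gamma)$, the arc length. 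This yields the claimed bound $\abs{\int_\Gamma F_\phi}\le N\cdot\mathrm{len}(\Gamma)\cdot\max_{\gamma\in\Gamma}\norm{\gamma}^{2-2/N}\cdot\max_{\gamma\in\Gamma}\norm{\phi(\gamma)}$.

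There is really no serious obstacle here — this is a routine estimate. The one point requiring a little care is the continuity and (essential) boundedness of the integrand: $\phi$ is continuous on $\U$ by hypothesis and $\Gamma$ is compact, so $\max_{\gamma\in\Gamma}\norm{\phi(\gamma)}$ is finite; likewise $\norm{\cdot}^{2-2/N}$ is continuous (note $2-2/N\ge 1>0$ since $N\ge 1$, so there is no singularity at the origin). One should also note that $F_\phi$ is continuous on all of $\U$ including at $\0$, so that the line integral is well-defined even if $\Gamma$ passes through the origin; this follows because $\norm{F_\phi(\w)}\le N\norm{\w}^{2-2/N}\norm{\phi(\w)}\to 0$ as $\w\to\0$. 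With these observations in place the displayed chain of inequalities is immediate, and the lemma follows.
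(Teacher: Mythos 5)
Your proof is correct and follows essentially the same route as the paper's: establish the pointwise bound $\norm{F_\phi(\w)}\le N\norm{\w}^{2-2/N}\norm{\phi(\w)}$ via the triangle inequality and Cauchy--Schwarz, then integrate against arc length and pull out suprema. One tiny slip in your side remark: for $N\ge 1$ you only get $2-\tfrac{2}{N}\ge 0$ (not $\ge 1$; equality at zero when $N=1$), but nonnegativity of the exponent is all that is needed for continuity at the origin, so the conclusion stands.
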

\begin{proof}
We begin by noting that the use of $\max$ (as opposed to $\sup$) in stated upper bound is appropriate, since under our definition of a curve (adopted from~\citet{buck2003advanced}), points lying on it constitute a compact set.
This subtlety is of little importance~--~one may as well replace $\max$ by $\sup$, and the lemma would still serve its purpose.

It is not difficult to see that for any $\w\in\U$, $\w\neq0$:
\beas
\norm{F_\phi(\w)}
{=}\norm{\w}^{2{-}\frac{2}{N}}\hspace{-1mm}\norm{\phi(\w){+}(N{-}1)\hspace{-0.5mm}\inprod{\phi(\w)}{\frac{\w}{\norm{\w}}}\hspace{-0.5mm}\frac{\w}{\norm{\w}}} 
\\
\leq\norm{\w}^{2{-}\frac{2}{N}}\hspace{-1mm}\left(\norm{\phi(\w)}{+}(N{-}1)\hspace{-0.5mm}\abs{\inprod{\phi(\w)}{\frac{\w}{\norm{\w}}}}\hspace{-0.5mm}{\cdot}\norm{\frac{\w}{\norm{\w}}}\right) 
\\
=\norm{\w}^{2{-}\frac{2}{N}}\hspace{-1mm}\left(\norm{\phi(\w)}{+}(N{-}1)\hspace{-0.5mm}\abs{\inprod{\phi(\w)}{\frac{\w}{\norm{\w}}}}\right) 
\,~~\quad\qquad\\
\leq\norm{\w}^{2{-}\frac{2}{N}}\hspace{-1mm}\left(\norm{\phi(\w)}{+}(N{-}1)\hspace{-0.5mm}\norm{\phi(\w)}\right) 
\qquad\qquad\qquad\qquad\\[2mm]
\leq{N}\norm{\w}^{2{-}\frac{2}{N}}\norm{\phi(\w)}
~~\qquad\qquad\qquad\qquad\qquad\qquad\qquad
\eeas
Trivially, $\norm{F_\phi(\w)}\leq{N}\norm{\w}^{2{-}\frac{2}{N}}\norm{\phi(\w)}$ holds for $\w{=}0$ as well.
The sought-after result now follows from the properties of line integrals:
\beas
\abs{\int_\Gamma{F}_\phi}
\leq\int_\Gamma\norm{F_\phi}
\leq\int_\Gamma{N}\norm{\w}^{2{-}\frac{2}{N}}\norm{\phi(\w)} 
\qquad\\
\leq{N}\cdot{len}(\Gamma)\cdot\max_{\gamma\in\Gamma}\norm{\gamma}^{2{-}\frac{2}{N}}\cdot\max_{\gamma\in\Gamma}\norm{\phi(\gamma)}
\eeas
\end{proof}

\begin{lemma}
\label{lemma:line_int_comp}
Let $\e$ be a unit vector, let $\Gamma_{r,R}$ be a piecewise smooth closed curve as specified in Equation~\ref{eq:curve} and the text thereafter, and let $\phi\mapsto{F}_\phi$ be the operator on continuous vector fields defined by Equation~\ref{eq:F_phi}.
Overloading notation by regarding~$\e(\cdot)\equiv\e$ as a constant vector field, it holds that:
$$\oint_{\Gamma_{r,R}}{F}_\e=\left(\frac{2N}{3-\nicefrac{2}{N}}-2\right)\left(R^{3{-}\frac{2}{N}}-r^{3{-}\frac{2}{N}}\right)$$
\end{lemma}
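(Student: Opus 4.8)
The plan is to evaluate $\oint_{\Gamma_{r,R}}F_\e$ by splitting the closed curve into its four pieces $\Gamma_{r,R}^1,\ldots,\Gamma_{r,R}^4$ and integrating over each separately, exploiting the geometry of each piece. Throughout, recall that $F_\e(\w)=\norm{\w}^{2-\frac{2}{N}}\left(\e+(N-1)\inprod{\e}{\frac{\w}{\norm{\w}}}\frac{\w}{\norm{\w}}\right)$ for $\w\neq\0$, and note that $\Gamma_{r,R}$ never touches the origin, so the special-case value of $F_\e$ at $\0$ is irrelevant.

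\textbf{Spherical arcs.} On $\Gamma_{r,R}^2$ (radius $r$) and $\Gamma_{r,R}^4$ (radius $R$) two simplifications occur. First, $\norm{\w}$ is constant along each arc, so the scalar prefactor $\norm{\w}^{2-\frac{2}{N}}$ equals $r^{2-\frac{2}{N}}$, resp.\ $R^{2-\frac{2}{N}}$, and pulls out of the integral. Second, any curve lying on a sphere centered at the origin satisfies $\w\cdot d\w=\tfrac{1}{2}d(\norm{\w}^2)=0$, i.e.\ its tangent is orthogonal to the radial direction $\w/\norm{\w}$; hence the rank-one term $(N-1)\inprod{\e}{\w/\norm{\w}}\,\w/\norm{\w}$ contributes zero to the line integral. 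What survives is $r^{2-\frac{2}{N}}\int_{\Gamma_{r,R}^2}\e\cdot d\w$, and since $\e$ is a constant field its line integral equals its inner product with the displacement between endpoints. Thus $\int_{\Gamma_{r,R}^2}F_\e=r^{2-\frac{2}{N}}\inprod{\e}{r\e-(-r\e)}=2r^{3-\frac{2}{N}}$, and likewise $\int_{\Gamma_{r,R}^4}F_\e=R^{2-\frac{2}{N}}\inprod{\e}{-R\e-R\e}=-2R^{3-\frac{2}{N}}$.

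\textbf{Radial segments.} On $\Gamma_{r,R}^1$ and $\Gamma_{r,R}^3$ I would parametrize $\w=t\e$, so $d\w=\e\,dt$, $\w/\norm{\w}=\mathrm{sgn}(t)\,\e$, and $\inprod{\e}{\w/\norm{\w}}=\mathrm{sgn}(t)$; the rank-one correction therefore aligns with $\e$ and collapses the field to $F_\e(t\e)=|t|^{2-\frac{2}{N}}\bigl(\e+(N-1)\e\bigr)=N|t|^{2-\frac{2}{N}}\e$. The integrand along these segments is then simply $N|t|^{2-\frac{2}{N}}\,dt$. On $\Gamma_{r,R}^3$, $t$ runs from $r$ to $R$ (both positive), giving $N\int_r^R t^{2-\frac{2}{N}}\,dt=\frac{N}{3-\frac{2}{N}}\bigl(R^{3-\frac{2}{N}}-r^{3-\frac{2}{N}}\bigr)$; on $\Gamma_{r,R}^1$, $t$ runs from $-R$ to $-r$ (both negative), so $|t|=-t$ and the substitution $u=-t$ yields the same value $\frac{N}{3-\frac{2}{N}}\bigl(R^{3-\frac{2}{N}}-r^{3-\frac{2}{N}}\bigr)$.

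\textbf{Summation.} Adding the four pieces,
\[
\oint_{\Gamma_{r,R}}F_\e=2\cdot\frac{N}{3-\frac{2}{N}}\bigl(R^{3-\frac{2}{N}}-r^{3-\frac{2}{N}}\bigr)+2r^{3-\frac{2}{N}}-2R^{3-\frac{2}{N}}=\left(\frac{2N}{3-\frac{2}{N}}-2\right)\bigl(R^{3-\frac{2}{N}}-r^{3-\frac{2}{N}}\bigr),
\]
which is exactly the asserted identity. I do not expect a genuine obstacle; the two points requiring mild care are the sign/orientation bookkeeping on $\Gamma_{r,R}^1$ (it is traversed in the direction of increasing $t$, and there $|t|^{2-\frac{2}{N}}=(-t)^{2-\frac{2}{N}}$) and the verification that the radial rank-one term really drops out on the spheres, which is precisely the orthogonality relation $\w\cdot d\w=0$.
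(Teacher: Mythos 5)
Your proof is correct and follows essentially the same route as the paper's: decompose $\Gamma_{r,R}$ into the two radial segments and two spherical arcs, observe that on the radial pieces the projection term aligns with $\e$ to give a factor of $N$, and that on the spherical arcs the radial term is orthogonal to the tangent while the prefactor $\norm{\w}^{2-\frac{2}{N}}$ is constant. The bookkeeping of signs and orientations matches the paper's computation exactly.
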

\begin{proof}
We compute the line integral by decomposing~$\Gamma_{r,R}$ into its smooth components $\Gamma_{r,R}^1\ldots\Gamma_{r,R}^4$:
\be
\oint_{\Gamma_{r,R}}\hspace{-3mm}F_\e~=~\int_{\Gamma_{r,R}^1}\hspace{-3mm}F_\e+\int_{\Gamma_{r,R}^2}\hspace{-3mm}F_\e+\int_{\Gamma_{r,R}^3}\hspace{-3mm}F_\e+\int_{\Gamma_{r,R}^4}\hspace{-3mm}F_\e
\label{eq:line_int_comp_parts}
\ee

Starting from~$\Gamma_{r,R}^1$, notice that for every point $\w$ lying on this curve:
$\langle\e,\frac{\w}{\norm{\w}}\rangle\frac{\w}{\norm{\w}}=\e$.
Therefore:
$$\int_{\Gamma_{r,R}^1}\hspace{-3mm}F_\e
=\int_{\Gamma_{r,R}^1}\hspace{-3mm}\norm{\w}^{2-\frac{2}{N}}\left(\e{+}(N{-}1)\e\right)
=N\int_{\Gamma_{r,R}^1}\hspace{-3mm}\norm{\w}^{2-\frac{2}{N}}\e$$
The line integral on the right translates into a simple univariate integral:
\beas
\int_{\Gamma_{r,R}^1}\hspace{-3mm}\norm{\w}^{2-\frac{2}{N}}\e=\int_{-R}^{-r}\abs{\rho}^{2-\frac{2}{N}}d\rho=\int_{r}^{R}\rho^{2-\frac{2}{N}}d\rho \\
=\frac{1}{3-\nicefrac{2}{N}}\left(R^{3-\frac{2}{N}}-r^{3-\frac{2}{N}}\right)
\qquad
\eeas
We thus have:
\be
\int_{\Gamma_{r,R}^1}\hspace{-3mm}F_\e=\frac{N}{3-\nicefrac{2}{N}}\left(R^{3-\frac{2}{N}}-r^{3-\frac{2}{N}}\right)
\label{eq:line_int_comp_part1}
\ee

Turning to~$\Gamma_{r,R}^2$, note that for any point $\w$ along this curve $\norm{\w}^{2-\frac{2}{N}}=r^{2-\frac{2}{N}}$, and $\frac{\w}{\norm{\w}}$ is perpendicular to the direction of motion.
This implies:
$$\int_{\Gamma_{r,R}^2}\hspace{-3mm}F_\e=r^{2-\frac{2}{N}}\int_{\Gamma_{r,R}^2}\hspace{-3mm}\e$$
The line integral $\int_{\Gamma_{r,R}^2}\hspace{-2mm}\e$ is simply equal to the progress $\Gamma_{r,R}^2$ makes in the direction of~$\e$, which is~$2r$.
Accordingly:
\be
\int_{\Gamma_{r,R}^2}\hspace{-3mm}F_\e=r^{2-\frac{2}{N}}\cdot2r=2r^{3-\frac{2}{N}}
\label{eq:line_int_comp_part2}
\ee

As for $\Gamma_{r,R}^3$ and~$\Gamma_{r,R}^4$, their line integrals may be computed similarly to those of $\Gamma_{r,R}^1$ and~$\Gamma_{r,R}^2$ respectively.
Such computations yield:
\bea
&&\int_{\Gamma_{r,R}^3}\hspace{-3mm}F_\e=\frac{N}{3-\nicefrac{2}{N}}\left(R^{3-\frac{2}{N}}-r^{3-\frac{2}{N}}\right)
\label{eq:line_int_comp_part3} \\
&&\int_{\Gamma_{r,R}^4}\hspace{-3mm}F_\e=-2R^{3-\frac{2}{N}}
\label{eq:line_int_comp_part4}
\eea

Combining Equation~\ref{eq:line_int_comp_parts} with Equations~\ref{eq:line_int_comp_part1}, \ref{eq:line_int_comp_part2}, \ref{eq:line_int_comp_part3} and~\ref{eq:line_int_comp_part4}, we obtain the desired result.
\end{proof}

% A CONCRETE ACCELERATION BOUND
\section{A Concrete Acceleration Bound} \label{app:acceleration_bound}

In Section~\ref{sec:acceleration} we illustrated qualitatively, on a family of very simple hypothetical learning problems, the potential of overparameterization (use of depth-$N$ linear network in place of classic linear model) to accelerate optimization.
In this appendix we demonstrate how the illustration can be made formal, by considering a special case and deriving a concrete bound on the acceleration.

In the context of Section~\ref{sec:acceleration}, we will treat the setting of $p=4$ ($\ell_4$~loss) and $N=2$ (depth-$2$ network).
We will also assume, in accordance with the problem being ill-conditioned~--~$y_1{\gg}{y}_2$, that initialization values are ill-conditioned as well, and in particular~$\epsilon_1/\epsilon_2\approx{y}_1/y_2$, where $\epsilon_i:=|w_i^{(0)}|$.
An additional assumption we make is that~$y_2$ is on the order of~$1$, and thus the near-zero initialization of $w_1$ and~$w_2$ implies $y_2\gg\epsilon_1,\epsilon_2$.
Finally, we assume that $\epsilon_{1}y_1\gg1$.

As shown in Section~\ref{sec:acceleration}, under gradient descent, $w_1$ and~$w_2$ move independently, and to prevent divergence, the learning rate must satisfy $\eta<\min\{2/y_1^{p-2},2/y_2^{p-2}\}$.
In our setting, this translates to (GD below stands for gradient descent):
\vspace{-2mm}
\be
\eta^{GD}<{2}/{y_1^2}
\label{eq:eta_gd}
\ee
\vspace{-6mm}\\
For~$w_2$, the optimal learning rate (convergence in a single step) is~$1/y_2^2$, and the constraint above will lead to very slow convergence (see Equation~\ref{eq:illus_gd_delta} and its surrounding text).

Suppose now that we optimize via overparameterization, \ie~with the update rule in Equation~\ref{eq:We_gd_single} (single output).
In our particular setting (recall, in addition to the above, that we omitted weight decay for simplicity~--~$\lambda=0$), this update rule translates to:
\vspace{-1mm}
{\small
\bea
[w_1^{(t+1)},w_2^{(t+1)}]^\top\mapsfrom[w_1^{(t)},w_2^{(t)}]^\top
~~\qquad\qquad\qquad\qquad\qquad
\label{eq:illus_op}\\
-\eta\Big((w_1^{(t)})^2+(w_2^{(t)})^2\Big)^{1/2}\cdot[(w_1^{(t)}-y_1)^3,(w_2^{(t)}-y_2)^3]^\top
\nonumber\\
-\eta\Big((w_1^{(t)})^2+(w_2^{(t)})^2\Big)^{-1/2}
\qquad\qquad\qquad\qquad\qquad\qquad
\nonumber\\
\cdot\big(w_1^{(t)}(w_1^{(t)}-y_1)^3+w_2^{(t)}(w_2^{(t)}-y_2)^3\big)\cdot[w_1^{(t)},w_2^{(t)}]^\top
\nonumber
\eea
}
\vspace{-5mm}\\
For the first iteration ($t=0$), replacing $\epsilon_i:=|w_i^{(0)}|$, while recalling that $y_1\gg{y_2}\gg\epsilon_1\gg\epsilon_2$, we obtain:
\vspace{-1mm}
\beas
[w_1^{(1)},w_2^{(1)}]^\top&\approx&\eta\cdot\epsilon_1\cdot[y_1^3,y_2^3]^\top+\eta\cdot{y}_1^3\cdot[\epsilon_1,\epsilon_2]^\top 
\\[1mm]
&=&\eta\cdot[2\epsilon_{1}y_1^3,\epsilon_{1}y_2^3+\epsilon_{2}y_1^3]^\top
\eeas
\vspace{-5mm}\\
Set $\eta=1/2\epsilon_{1}y_1^2$.
Then $w_1^{(1)}\approx{y}_1$ and $w_2^{(1)}\approx{y}_2^3/2y_1^2+\epsilon_{2}y_1/2\epsilon_1$.
Our assumptions thus far ($y_1\gg{y}_2$ and $\epsilon_1\gg\epsilon_2$) imply $w_1^{(1)}\gg{w}_2^{(1)}$.
Moreover, since $\epsilon_2/\epsilon_1\approx{y}_2/y_1$, it holds that $w_2^{(1)}\in\OO(y_2)=\OO(1)$.
Taking all of this into account, the second iteration ($t=1$) of the overparameterized update rule (Equation~\ref{eq:illus_op}) becomes:
\vspace{-1mm}
\beas
[w_1^{(2)},w_2^{(2)}]^\top\approx[y_1,w_2^{(1)}]^\top
\quad\qquad\qquad\qquad\qquad\qquad\qquad\\
-\frac{1}{2\epsilon_{1}y_1}[(w_1^{(1)}-y_1)^3,(w_2^{(1)}-y_2)^3]^\top 
~~\qquad\qquad\\
-\frac{y_1(w_1^{(1)}-y_1)^3+w_2^{(1)}(w_2^{(1)}-y_2)^3}{2\epsilon_{1}y_1^3}[y_1,w_2^{(1)}]^\top
\\
\approx[y_1,w_2^{(1)}-1/2\epsilon_{1}y_1\cdot(w_2^{(1)}-y_2)^3]^\top
~~\quad\qquad\qquad
\eeas
\vspace{-5mm}\\
In words, $w_1$~will stay approximately equal to~$y_1$, whereas $w_2$~will take a step that corresponds to gradient descent with learning rate (OP below stands for overparameterization):
\vspace{-2mm}
\be
\eta^{OP}:={1}/{2\epsilon_{1}y_1}
\label{eq:eta_op}
\ee
\vspace{-5mm}\\
By assumption $\epsilon_{1}y_1{\gg}1$ and~$y_2{\in}\OO(1)$, thus $\eta^{OP}{<}2/y_2^2$, meaning that $w_2$~will remain on the order of~$y_2$ (or less).
An inductive argument can therefore be applied, and our observation regarding the second iteration ($t=1$) continues to hold throughout~--~$w_1$ is (approximately) fixed at~$y_1$, and $w_2$ follows steps that correspond to gradient descent with learning rate~$\eta^{OP}$. 

To summarize our findings, we have shown that while standard gradient descent limits~$w_2$ with a learning rate~$\eta^{GD}$ that is at most~$2/y_1^2$ (Equation~\ref{eq:eta_gd}), overparameterization can be adjusted to induce on~$w_2$ an implicit gradient descent scheme with learning rate~$\eta^{OP}=1/2\epsilon_{1}y_1$ (Equation~\ref{eq:eta_op}), all while admitting immediate (single-step) convergence for~$w_1$.
Since both $\eta^{GD}$ and~$\eta^{OP}$ are well below~$1/y_2^2$, we obtain acceleration by at least $\eta^{OP}/\eta^{GD}>y_1/4\epsilon_1$ (we remind the reader that $y_1\gg1$ is the target value of~$w_1$, and $\epsilon_1\ll1$ is the magnitude of its initialization).

% IMPLEMENTATION DETAILS
\section{Implementation Details} \label{app:impl}

Below we provide implementation details omitted from our experimental report (Section~\ref{sec:exp}).

  % LINEAR NEURAL NETWORKS
\subsection{Linear Neural Networks} \label{app:impl:lnn}

The details hereafter apply to all of our experiments besides that on the convolutional network (Figure~\ref{fig:exp_resnet_cnn}-right).

In accordance with our theoretical setup (Section~\ref{sec:lnn}), evaluated linear networks did not include bias terms, only weight matrices.
The latter were initialized to small values, drawn i.i.d.~from a Gaussian distribution with mean zero and standard deviation~$0.01$.
The only exception to this was the setting of identity initialization (Figure~\ref{fig:exp_resnet_cnn}-left), in which an offset of~$1$ was added to the diagonal elements of each weight matrix (including those that are not square).

When applying a grid search over learning rates, the values $\{10^{-5},5\cdot10^{-5},\ldots,10^{-1},5\cdot10^{-1}\}$ were tried.
We note that in the case of depth-$8$ network with standard near-zero initialization (Figure~\ref{fig:exp_resnet_cnn}-left), all learning rates led either to divergence, or to a failure to converge (vanishing gradients).

For computing optimal $\ell_2$~loss (used as an offset in respective convergence plots), we simply solved, in closed form, the corresponding least squares problem.
For the optimal $\ell_4$~loss, we used \texttt{scipy.optimize.minimize}~--~a numerical optimizer built into SciPy~\cite{scipy}, with the default method of BFGS~\cite{jorge}.

  % CONVOLUTIONAL NETWORK
\subsection{Convolutional Network} \label{app:impl:cnn}

For the experiment on TensorFlow's MNIST convolutional network tutorial, we simply downloaded the code,\note{
\url{https://github.com/tensorflow/models/tree/master/tutorials/image/mnist}
} and introduced two minor changes:
\vspace{-3mm}
\begin{itemize}
\item Hidden dense layer: $3136{\times}512$ weight matrix replaced by multiplication of $3136{\times}512$ and~$512{\times}512$ matrices.
\vspace{-6mm}
\item Output layer: $512{\times}10$ weight matrix replaced by multiplication of $512{\times}10$ and~$10{\times}10$ matrices.
\end{itemize}
\vspace{-3mm}
The newly introduced weight matrices were initialized in the same way as their predecessors (random Gaussian distribution with mean zero and standard deviation~$0.1$).
Besides the above, no change was made.
An addition of roughly~$250K$ parameters to a $1.6M$-parameter model gave the speedup presented in Figure~\ref{fig:exp_resnet_cnn}-right.

\medskip

To rule out the possibility of the speedup resulting from suboptimal learning rates, we reran the experiment with grid search over the latter.
The learning rate hardcoded into the tutorial follows an exponentially decaying schedule, with base value~$10^{-2}$.
For both the original and overparameterized models, training was run multiple times, with the base value varying in $\{10^{-5},5\cdot10^{-5},\ldots,10^{-1},5\cdot10^{-1}\}$.
We chose, for each model separately, the configuration giving fastest convergence, and then compared the models one against the other.
The observed gap in convergence rates was similar to that in Figure~\ref{fig:exp_resnet_cnn}-right.

An additional point we set out to examine, is the sensitivity of the speedup to initialization of overparameterized layers.
For this purpose, we retrained the overparameterized model multiple times, varying in $\{10^{-3},5\cdot10^{-3},\ldots,10^{-1},5\cdot10^{-1}\}$ the standard deviation of the Gaussian distribution initializing overparameterized layers (as stated above, this standard deviation was originally set to~$10^{-1}$).
Convergence rates across the different runs were almost identical.
In particular, they were all orders of magnitude faster than the convergence rate of the baseline, non-overparameterized~model.

\end{document}